\definecolor{darkdarkgreen}{rgb}{0,0.35,0}
\definecolor{darkgreen}{rgb}{0,0.4,0}
\definecolor{darkblue}{rgb}{0,0,0.6}
\definecolor{lightgray}{rgb}{0.95,0.95,0.95}
\newcommand{\namecite}[1]{\citet{#1}}
\newcommand{\citeeg}[1]{\cite[e.g.,][]{#1}}
\newcommand{\citecf}[1]{\cite[cf.][]{#1}}
\newtheorem{theorem}{Theorem}
\newtheorem{proposition}{Proposition}
\newtheorem{lemma}{Lemma}
\theoremstyle{definition}
\newtheorem{definition}{Definition}
\newtheorem{remark}{Remark}
\newcommand\argmax{\mathrm{arg} \max}
\newcommand\bigOh{\mathrm{O}}
\newcommand\ltlOh{\mathrm{o}}
\newcommand\cost{\mathit{cost}}
\newcommand\ub{\mathit{ub}}
\title{Selecting Near-Optimal Learners via Incremental Data Allocation}
\author{
Ashish Sabharwal\\
Allen Institute for AI\\
Seattle, WA, USA\\
\texttt{\small AshishS@allenai.org}
\and
Horst Samulowitz,\ Gerald Tesauro\\
IBM Watson Research Center\\
Yorktown Heights, NY, USA\\
\texttt{\small samulowitz,gtesauro@us.ibm.com}
}
\begin{document}

\date{}

\maketitle


\begin{abstract}
We study a novel machine learning (ML) problem setting of sequentially allocating small subsets of training data amongst a large set of classifiers. The goal is to select a classifier that will give near-optimal accuracy when trained on all data, while also minimizing the cost of misallocated samples. This is motivated by large modern datasets and ML toolkits with many combinations of learning algorithms and hyper-parameters. Inspired by the principle of ``optimism under uncertainty,'' we propose an innovative strategy, Data Allocation using Upper Bounds (DAUB), which robustly achieves these objectives across a variety of real-world datasets.

We further develop substantial theoretical support for DAUB in an idealized setting where the expected accuracy of a classifier trained on $n$ samples can be known exactly. Under these conditions we establish a rigorous sub-linear bound on the regret of the approach (in terms of misallocated data), as well as a rigorous bound on suboptimality of the selected classifier. Our accuracy estimates using real-world datasets only entail mild violations of the theoretical scenario, suggesting that the practical behavior of DAUB is likely to approach the idealized behavior. 

\end{abstract}


\section{Introduction}

The goal of our work is to develop novel practical methods
to enhance tractability of Data Science practice in the era of
Big Data.  Consider, for example, the following very common scenario:
A Data Science practitioner is given a data set comprising a training set,
a validation set, and a collection of classifiers in an ML toolkit,
each of which may have numerous possible hyper-parameterizations.
The practitioner would like to determine which classifier/parameter combination
(hereafter referred to as ``learner'') would yield the highest validation
accuracy, after training on all examples in the training set.  However,
the practitioner may have quite limited domain knowledge of salient
characteristics of the data, or indeed of many of the algorithms
in the toolkit.

In such a scenario, the practitioner may inevitably resort to the
traditional approach to finding the best
learner~\citecf{caruana2006empirical}, namely,
brute-force training of all learners on the full training set,
and selecting the one with best validation accuracy.
Such an approach is acceptable if the computational cost of training
all learners is not an issue.
However, in the era of Big Data, this is becoming increasingly infeasible.
Web-scale datasets are proliferating from sources such as Twitter, TREC,
SNAP, ImageNet, and the UCI repository, particularly in domains such as
vision and NLP.  ImageNet datasets can exceed 100 gigabytes, and the
recent ``YouTube-Sports-1M'' video collection exceeds 40 terabytes.
Moreover, the diverse set of learners available in today's
ML packages~\cite{weka,scikit-learn,pybrain,mallet}
are continually expanding, and many of the most successful recent
algorithms entail very heavy training costs (e.g., Deep Learning neural
nets with Dropout).

The above factors motivate a search for techniques to reduce training
cost while still reliably finding a near-optimal learner.
One could consider training each learner on
a small subset of the training
examples, and choose the best performing one.
This entails less computation, but could result
in significant loss of learner accuracy,
since performance on a small subset can be a
misleading predictor of performance on the full dataset.
As an alternative, the small-subset results could be projected forward
using parameterized accuracy models to predict full training set accuracy.
Creating such models is, however, a daunting task~\cite{PredictingModel2008}, 
potentially needing prior knowledge about learners and domain,
characteristic features of the data, etc.

In this paper, we develop a novel formulation of what it means to solve
the above dual-objective problem, and we present a novel solution approach,
inspired by multi-armed bandit literature
\cite{acf02:ucb1J,thompson33:sampling,scott2010bayesbandits,agrawal2011analysis}.
Our method develops model-free, cost-sensitive strategies for
sequentially allocating small batches of training data to selected
learners, wherein ``cost'' reflects misallocated
samples that were used to train other learners that were ultimately not
selected.  We express the cost in terms of the \emph{regret} of the approach,
comparing the algorithm's cost with that of an oracle which only
allocates data to the best learner.

Our main contributions are as follows.
First, we give a precise definition of a new ML problem setting,
called the \emph{Cost-Sensitive Training Data Allocation
Problem}.  Second, we present a simple, knowledge-free, easy-to-use
and practical new algorithm for this setting, called DAUB (Data
Allocation with Upper Bounds).
Third, we give empirical demonstrations that DAUB achieves significant
savings in training time while reliably achieving optimal or
near-optimal learner accuracy over multiple real-world datasets.
Fourth, we provide theoretical support for DAUB in an idealization
of the real-world setting, wherein DAUB can work with noiseless accuracy
estimates when training on $n$ samples, in lieu of actual noisy estimates.
The real-world behavior of DAUB will progressively approach the idealized
behavior as $n$ becomes large.  In this setting, we establish a bound on
accuracy of learners selected by DAUB, a sub-linear bound on the data
misallocated by DAUB, and an associated bound on the
computational training cost (regret).





Related work on traditional bandit strategies mentioned above,
such as the celebrated UCB1~\cite{acf02:ucb1J}
and Thompson sampling~\cite{thompson33:sampling,agrawal2011analysis},
presume that additional trials of a given arm yield
stationary payoffs. Whereas in our scenario, additional data allocations
to a learner yield increasing values of its accuracy.
There are also existing
methods to optimize a single arbitrary function while minimizing the
number of evaluations~\citecf{munos2013:optimism}. These also do not
fit our setting: we are dealing with multiple unknown but
well-behaved functions, and wish to rank them on estimated accuracy
after training on the full dataset, based on
their upper-bounds from much fewer samples.

Somewhat related is
algorithm portfolio selection~\cite{Rice} which
seeks the most suitable algorithm (e.g.,
learner) for a given problem instance, based on knowledge from
other instances and features characterizing the
current instance. Note, however, that most selection algorithms use
parameterized accuracy models which are fit
to data \citeeg{algorithmruntimeprediction}.
Also related is work on hyper-parameter optimization,
where one searches for novel
configurations of algorithms to improve performance
\cite{PAOML,MSMS,randomsearch,algsforhyperoptimization}
or a combination of both~\cite{HyperOptviaMetaLearning}.
An example is
Auto-Weka~\cite{AutoWEKA}, which combines selection and parameter
configuration based on Bayesian optimization
\citecf{bayesianoptimization}. Predicting generalization error
on unseen data has in fact been recognized as a
major ML challenge~\cite{Guyon2006:challenge}.

A recent non-frequentist
approach~\cite{BayesianBandits} takes a Bayesian view of
multi-armed bandits, applicable especially when the number of
arms exceeds the number of allowed evaluations, and applies it also to
automatic selection of ML algorithms. Like some
prior methods, it evaluates algorithms on a small
fixed percentage (e.g., 10\%) of the full dataset.
Unlike the above approaches, 
we do not assume that training (and evaluation) on a small fixed fraction
of the data reliably ranks full-training results.

Finally, \namecite{domhanspeeding} recently proposed extrapolating
learning curves to enable early termination of non-promising learners.
Their method is designed specifically for neural networks and does not
apply directly to many classifiers (SVMs, trees, etc.) that train
non-iteratively from a single pass through the dataset.  They also do
not focus on a theoretical justification and fit accuracy estimates to
a library of hand-designed learning curves.

\section{Cost-Sensitive Training Data Allocation}

We begin by formally defining the problem of cost-sensitive training data allocation.
As before, we use \emph{learner} to refer to a classifier along with a
hyper-parameter setting for it.
Let $\mathcal{C} = C_1, C_2, \ldots, C_M$ be a set of $M$
learners which can be trained on subsets of a training set $T_r$ and
evaluated on a validation set $T_v$. Let $|T_r| = N$.
For $k \in \mathbb{N}$, let $[k]$
denote the set $\{1, 2, \ldots, k\}$.

For $i \in [M]$, let $c_i : [N]
\to \mathbb{R}^+$ be a \emph{cost function} denoting expected computational
cost of training learner $C_i$ when $n$ training
examples are drawn uniformly at random from $T_r$.\footnote{While we
  define the core concepts in terms of expected values suitable for a
  formal definition and idealized analysis, the actual DAUB algorithm
  will operate on observed values of $c_i$ on particular subsets of
  $n$ training examples chosen at runtime.}
We make two common assumptions about the training process, namely,
that it looks at all training data and its complexity grows at least
linearly. Formally, $c_i(n) \geq n$ and $c_i(m) \geq \frac{m}{n}
c_i(n)$ for $m > n$.

For $i \in [M]$, let $f_i : [N] \to [0,1]$ be an \emph{accuracy
  function} where $f_i(n)$ denotes expected accuracy of $C_i$ on $T_v$
when trained on $n$ training examples chosen at random from $T_r$.
The corresponding \emph{error function}, $e_i(n)$, is defined as $1 -
f_i(n)$. Note that our tool also supports accuracy functions not tied
to a fixed validation set $T_v$ (e.g., cross-validation) and other
measures such as precision, recall, and F1-score; our analysis applies
equally well to these measures.

We denote a training data \emph{allocation} of $n$ training samples
to learner $i$ by a pair $a = (i, n)$.
Let $S = \left( (i^{(1)},n^{(1)}), (i^{(2)}, n^{(2)}), \ldots,
  (i^{(s)}, n^{(s)}) \right)$ be a \emph{sequence of allocations} to
learners in $\mathcal{C}$. We will use $S_i$ to denote the
\emph{induced subsequence} containing all training data allocations to
learner $C_i$, i.e., the subsequence of $S$ induced by all pairs
$(i^{(k)},n^{(k)})$ such that $i^{(k)} = i$. In our context,
if allocations $(i,n^{(k)})$ and
$(i,n^{(\ell)})$ are in $S_i$ with $k < \ell$, then $n^{(k)} <
n^{(\ell)}$.

Evaluating $f_i(n)$ amounts to training learner $C_i$ on $n$
examples from $T_r$ and evaluating its accuracy. This, in expectation, incurs a
computational cost of $c_i(n)$. In general, the expected \emph{training
  complexity} or cost associated with $\mathcal{C}$ under the data
allocation sequence $S$ is $\cost(S) = \sum_{(i,n) \in S} c_i(n)$.

Our goal is to search for an $\tilde{i} \in [M]$ such that
$f_{\tilde{i}}(N)$ is maximized, while also ensuring that overall
training cost
is not too large relative to
$c_{\tilde{i}}(N)$. This bi-objective criterion is not easy to
achieve. E.g., a brute-force evaluation, corresponding to
$S = \left( (1,N), (2,N), \ldots, (M,N)
\right)$ and $\tilde{i} = \argmax_{i \in [M]} f_i(N)$, obtains the
optimal $\tilde{i}$ but incurs maximum training cost of $c_i(N)$
for all suboptimal learners. On the other hand, a low-cost
heuristic $S = \left( (1,n), (2,n), \ldots, (m,n),
  (\tilde{i},N) \right)$ for some $n \ll N$ and $\tilde{i} = \argmax_i
f_i(n)$, incurs a small training overhead of only $c_i(n)$ for each
suboptimal $C_i$, but may choose an arbitrarily suboptimal $\tilde{i}$.

We seek an in-between solution,
ideally with the best of both worlds: a bounded optimality gap on
$C_{\tilde{i}}$'s accuracy, and a bounded regret in
terms of data misallocated to sufficiently suboptimal
learners. Informally speaking, we will ensure that learners with
performance at least $\Delta$ worse than optimal are allocated only
$\ltlOh(N)$ training examples, i.e., an asymptotically vanishing
fraction of $T_r$. Under certain conditions, this will ensure that the
training cost regret is sublinear. We next formally define the notions
of suboptimality and regret in this context.

\begin{definition}
  \label{def:optimality}
  Let $\mathcal{C}$ be a collection of $M$ learners with accuracy
  functions $f_i$, $\Delta \in (0,1]$, $n \in \mathbb{N}^+$, and $i^* =
  \argmax_{i \in [M]} f_i(n)$. A learner $C_j \in \mathcal{C}$ is
  called \emph{$(n,\Delta)$-suboptimal} for $\mathcal{C}$ if
  $f_{i^*}(n) - f_j(n) \geq \Delta$, and \emph{$(n,\Delta)$-optimal}
  otherwise.
\end{definition}

\begin{definition}
  \label{def:regret}
  Let $S$ be a data allocation sequence for a collection
  $\mathcal{C}$ of learners with accuracy functions $f_i$, $\Delta
  \in (0,1]$, and $n \in \mathbb{N}^+$. The
  \emph{$(n,\Delta)$-regret} of $S$ for $\mathcal{C}$ is defined as:
  \[
      \sum_{i : C_i \text{\ is\ } (n,\Delta)\text{-suboptimal}} \cost(S_i)
  \]
\end{definition}

The regret of $S$ is thus the cumulative cost of
training all $(n,\Delta)$-suboptimal learners when using $S$.

\begin{definition}[COST-SENSITIVE TRAINING DATA ALLOCATION PROBLEM]
  \label{def:problem}
  Let $\mathcal{C} = \{C_1, \ldots, C_M\}$ be a set of learners,
  $T_r$ be a training set for $\mathcal{C}$ containing $N$ examples,
  $T_v$ be a validation set, $c_i$ and $f_i$, for $i \in [M]$, be
  the training cost and accuracy functions, resp., for learner
  $C_i$, and $\Delta \in (0,1]$ be a constant.
  The \emph{Cost-Sensitive Training Data Allocation Problem} is to compute a training
  data allocation sequence $S$ for $\mathcal{C}$ and $T_r$ as well as
  a value $\tilde{i} \in [M]$ such that:
  \begin{enumerate}
  \item $S$ contains $(\tilde{i},N)$,
  \item $C_{\tilde{i}}$ is $(N,\Delta)$-optimal,
  \item $\cost(S_{\tilde{i}}) \leq d \cdot c_{\tilde{i}}(N)$ for some fixed
    constant $d$, and
  \item $(N,\Delta)$-regret of $S$ is $\ltlOh(M \cdot \cost(S_{\tilde{i}}))$
    in $N$.
  \end{enumerate}
\end{definition}

A solution to this problem thus identifies an
$(N,\Delta)$-optimal learner $C_{\tilde{i}}$, trained
on all of $T_r$, incurring on $C_{\tilde{i}}$ no more than a constant
factor overhead relative to the minimum training cost of
$c_{\tilde{i}}(N)$, and with a guarantee that any
$(N,\Delta)$-suboptimal learner $C_i$ incurred a vanishingly small
training cost compared to training $C_{\tilde{i}}$ (specifically,
$\cost(S_i) / \cost(S_{\tilde{i}}) \to 0$ as $N \to \infty$).

\section{The DAUB Algorithm}
\label{sec:optimistic}

Algorithm~\ref{alg:daub} describes our \emph{Data Allocation using
Upper Bounds} strategy.  The basic idea is to project an optimistic
upper bound on full-training accuracy $f_i(N)$ of learner $i$ using
recent evaluations $f_i(n)$.  The learner with highest upper bound
is then selected to receive additional samples.
Our implementation of DAUB uses monotone regression to estimate
upper bounds on $f_i(N)$ as detailed below,
since observed accuracies are noisy and may occasionally violate known
monotonicity of learning curves. Whereas in the noise-free setting,
a straight line through the two most recent values of $f_i(n)$ 
provides a strict upper bound on $f_i(N)$.

\newcommand{\pluseq}{\mathrel{+}=}
\newcommand{\minuseq}{\mathrel{-}=}

\begin{algorithm}[t]
\begin{small}
\dontprintsemicolon
\SetKwInOut{Input}{\it Input}
\SetKwInOut{Params}{\it Params}
\SetKwInOut{Output}{\it Output}
\SetKwFunction{DAUB}{\textbf{DAUB}}
\SetKwFunction{TrainClassifier}{TrainLearner}
\SetKwFunction{UpdateBound}{UpdateBound}
\SetKwFunction{EstimateSlope}{EstimateSlope}
\Input{Learners $\mathcal{C} = \{C_1, \ldots, C_M\}$, training
  examples $T_r$, $N = |T_r|$, validation set $T_v$}
\Output{Learner $C_{\tilde{i}}$ trained on $T_r$, data allocation sequence $S$}
\Params{Geometric ratio $r > 1$, granularity $b \in \mathbb{N}^+$
  s.t.\ $b r^2 \leq N$}
\BlankLine\;
\DAUB{$\mathcal{C}, T_r, T_v, r, b$}\;
\Begin{
  $S \leftarrow$ empty sequence\;
  \For {$i \in 1..M$} {
    \For {$k \in 0..2$} {
      append $(i,b r^k)$ to $S$\;
      \TrainClassifier($i, b r^k$)\;
    }
    $n_i \leftarrow b r^2$\;
    $u_i \leftarrow$ \UpdateBound($i, n_i$)\;
  }
  \While {$(\max_i n_i) < N$} {
    $j \leftarrow \argmax_{i \in [M]} u_i$\ \ \ \ (break ties arbitrarily)\;
    $n_j \leftarrow \min \{\lceil r n_j \rceil, N\}$\;
    append $(j,n_j)$ to $S$\;
    \TrainClassifier($j, n_j$)\;
    $u_j \leftarrow$ \UpdateBound($j$)\;
  }
  select $\tilde{i}$ such that $n_{\tilde{i}} = N$\;
  \Return $C_{\tilde{i}}, S$\;
}
\BlankLine\;
\textbf{sub} \TrainClassifier{$i \in [M], n \in [N]$}\;
\Begin{
  $T \leftarrow n$ examples sampled from $T_r$\;
  Train $C_i$ on $T$\;
  $f^T_i[n] \leftarrow$ training accuracy of $C_i$ on $T$\;
  $f^V_i[n] \leftarrow$ validation accuracy of $C_i$ on $T_v$\;
  \If {$n/r \geq b$} {
    $\delta \leftarrow (f^V_i[n] - f^V_i[n/r])$\;
    \If {$\delta < 0$} {
      $f^V_i[n/r] \minuseq \delta / 2$\;
      $f^V_i[n] \pluseq \delta / 2$
    }
  }
}
\BlankLine\;
\textbf{sub} \UpdateBound{$i \in [M], n \in [N]$}\;
\Begin{
  $f'^V_i[n] \leftarrow$ LinearRegrSlope($f^V_i[n/r^2], f^V_i[n/r], f^V_i[n]$)\;
  $\ub^V_i[n] \leftarrow f^V_i[n] + (N-n) f'^V_i[n]$\;
  \Return $\min \{f^T_i[n], \ub^V_i[n]\}$\;
}
\end{small}
\caption{\label{alg:daub} Data Allocation using Upper Bounds}
\end{algorithm}



As a bootstrapping step, DAUB first allocates
$b, br,$ and $br^2 \leq N$ training examples to each learner $C_i$,
trains them, and records their training and validation accuracy in
arrays $f^T_i$ and $f^V_i$, resp.
If $f^V_i$ at the current point is
smaller than at the previous point,
DAUB uses a simple \emph{monotone regression} method,
making the two values meet in the middle.

After bootstrapping, in each iteration, it identifies a learner
$C_j$ that has the most promising upper bound estimate (computed as
discussed next) on the unknown projected expected accuracy $f_j(N)$
and allocates $r$ times more examples (up to $N$) to it than what
$C_j$ was allocated previously. For computing the upper bound
estimate, DAUB uses two sources. First, assuming training and
validation data come from the same distribution, $f^T_i[n_i]$ provides
such an estimate. Further, as will be justified in the analysis of the
idealized scenario called DAUB*, $\ub^V_i[n_i] = f^V_i[n_i] + (N-n_i)
f'^V_i[n_i]$ also provides such an estimate under certain conditions,
where $f'^V_i[n_i]$ is the estimated derivative computed as the slope
of the linear regression best fit line through $f^V_i[n]$ for $n \in
\{n_i/r^2, n_i/r, n_i\}$. Once some learner $C_{\tilde{i}}$ is
allocated all $N$ training examples, DAUB halts and outputs
$C_{\tilde{i}}$ along with the allocation sequence it used.



\subsection{Theoretical Support for DAUB}

To help understand the behavior of DAUB,
we consider an idealized variant, DAUB*, that operates precisely
like DAUB but \emph{has access to the true expected accuracy and cost
functions, $f_i(n)$ and $c_i(n)$, not just their observed estimates.}
As $n$ grows, learning variance (across random
batches of size $n$) decreases, observed estimates of $f_i(n)$
and $c_i(n)$ converge to these ideal values, and the behavior of DAUB
thus approaches that of DAUB*.

Let $f^* = \max_{i \in [M]} f_i(N)$ be the (unknown) target accuracy
and $C_{i*}$ be the corresponding (unknown) optimal learner.  For
each $C_i$, let $u_i : [N] \to [0,1]$ be an arbitrary projected upper
bound estimate that DAUB* uses for $f_i(N)$ when it has allocated $n <
N$ training examples to $C_i$.  We will assume w.l.o.g.\ that $u_i$ is
non-increasing at the points where it is evaluated by
DAUB*.\footnote{Since DAUB* evaluates $u_i$ for increasing
  values of $n$, it is easy to enforce monotonicity.}  For the initial
part of the analysis, we will think of $u_i$ as a black-box function,
ignoring how it is computed.  Let $u^{\text{min}}(N) = \min_{i \in
  [N]} \{u_i(N)\}$.  It may be verified that once $u_j(n)$ drops
below $u^{\text{min}}(N)$, DAUB* will stop allocating more samples to
$C_j$.
%
%
While this gives insight into the behavior of DAUB*, for the analysis
we will use a slightly weaker form $n^*_i$ that depends on the target accuracy
$f^*$ rather than $u^{\text{min}}(N)$.

\begin{definition}
\label{def:valid-proj-ub}
$u_i : [N] \to [0,1]$ is a \emph{valid projected upper bound function} if $u_i(n) \geq
f_i(N)$ for all $n \in [N]$.
\end{definition}

\begin{definition}
\label{def:n-star}
Define $n^*_i \in \mathbb{N}$ as $N$ if $u_i(N) \geq f^*$ and as $\min
\{\ell \mid u_i(\ell) < f^*\}$ otherwise.
\end{definition}

A key observation is that when using $u_j$ as the only source of
information about $C_j$, \emph{one must allocate at least $n^*_j$
  examples to $C_j$} before acquiring enough information to conclude
that $C_j$ is suboptimal. Note that $n^*_j$ depends on the interaction
between $u_j$ and $f^*$, and is thus unknown. Interestingly, we can
show that DAUB*$(\mathcal{C}, T_r, T_v, r, b)$ allocates to $C_j$ at
most a constant factor more examples, specifically fewer than $r
n^*_j$ in each step and $\frac{r^2}{r-1} n^*_j$ in total, if it has
access to valid projected upper bound functions for $C_j$ and
$C_{i^*}$ (cf.~Lemma~\ref{lem:data-allocation-new} in Appendix).  In
other words, \emph{DAUB*'s allocation is essentially optimal} w.r.t.\
$u_j$.

\begin{remark}
A careful selection of the learner in each round is
critical for allocation optimality w.r.t.\ $u_j$. Consider a simpler
alternative: In round $k$, train all currently active learners on $n = b r^k$
examples, compute all $f_i(n)$ and $u_i(n)$, and permanently drop
$C_j$ from consideration if $u_j(n) < f_i(n)$ for some $C_i$. This
will not guarantee allocation optimality; any permanent decisions
to drop a classifier must necessarily be conservative to be correct.
By instead only temporarily suspending suboptimal looking
learners, DAUB* guarantees a
much stronger property: $C_j$ receives no more allocation as soon as
$u_j(n)$ drops below the (unknown) target $f^*$.
\end{remark}

The following 
observation connects data allocation to
training cost: if DAUB* allocates at most $n$ training examples to a
learner $C_j$ in each step, then its overall cost for $C_j$ is at most
$\frac{r}{r-1} c_j(n)$ (cf.~Lemma~\ref{lem:data-vs-cost-new} in
Appendix). Combining this with Lemma~\ref{lem:data-allocation-new}, we
immediately obtain the following result regarding DAUB*'s
regret:\footnote{All proofs are deferred to the Appendix.}

\begin{theorem}
  \label{thm:regret-new}
  Let $\mathcal{C}, T_r, T_v, N, M, c_i$ and $f_i$ for $i \in [M]$ be
  as in Definition~\ref{def:problem}.  Let $r > 1, b \in
  \mathbb{N}^+,$ and $S$ be the 
  allocation sequence
  produced by DAUB*$(\mathcal{C}, T_r, T_v, r, b)$.  If the projected
  upper bound functions $u_j$ and $u_{i*}$ used by DAUB* are valid,
  then $\cost(S_j) \leq \frac{r}{r-1} c_j(r n^*_j)$.
\end{theorem}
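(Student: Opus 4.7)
The plan is to obtain the theorem as an immediate composition of the two appendix lemmas invoked in the text just before the statement. Lemma~\ref{lem:data-allocation-new} bounds every individual allocation DAUB* makes to $C_j$ by (at most) $r\,n^*_j$, and Lemma~\ref{lem:data-vs-cost-new} turns such a per-step bound into a total-cost bound of the form $\frac{r}{r-1}\,c_j(\cdot)$. Chaining these directly yields $\cost(S_j) \leq \frac{r}{r-1}\, c_j(r\,n^*_j)$, so the real work is in justifying the two lemmas and then stitching them together.

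For the allocation lemma I would argue as follows. DAUB* only assigns more data to $C_j$ while $u_j(n_j) \ge f^*$: at any round where $j$ is chosen we have $j = \argmax_i u_i$, and validity of $u_{i^*}$ gives $u_{i^*}(n_{i^*}) \ge f_{i^*}(N) = f^*$, so $u_j(n_j) \ge u_{i^*}(n_{i^*}) \ge f^*$. Combined with the w.l.o.g.\ monotonicity of $u_j$ at evaluation points and Definition~\ref{def:n-star}, this rephrases as $n_j < n^*_j$ at every selection round. Hence the very next allocation is $\min\{\lceil r n_j\rceil,\,N\} \le r\,n^*_j$, which is exactly the per-step bound; summing the resulting geometric allocation sequence $b, br, br^2, \ldots$ up to the first term that exceeds $n^*_j$ would also give the cumulative allocation bound $\frac{r^2}{r-1}\,n^*_j$ referenced in the paper, but only the per-step bound is needed here.

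For the cost lemma, the allocations to $C_j$ form a geometric sequence $b, br, br^2, \ldots, br^K$ with $br^K \le r\,n^*_j$. The assumption $c_j(m) \ge \frac{m}{n'}\, c_j(n')$ for $m > n'$ rearranges to $c_j(br^k) \le r^{k-K}\, c_j(br^K)$, so
\[
\cost(S_j) \;=\; \sum_{k=0}^{K} c_j(br^k) \;\le\; c_j(br^K) \sum_{k=0}^{K} r^{-(K-k)} \;\le\; \frac{r}{r-1}\, c_j(br^K) \;\le\; \frac{r}{r-1}\, c_j(r\,n^*_j),
\]
using monotonicity of $c_j$ in the last step. The main obstacle is the allocation lemma rather than the cost bookkeeping: the subtle point is that once $u_j(n_j) < f^*$ the learner $C_j$ is never picked again, because $u_{i^*}(n_{i^*})$ remains above $f^*$ by validity. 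This is precisely where validity of $u_{i^*}$ is essential, and why the preceding remark stresses \emph{temporarily} suspending rather than permanently dropping learners. Once that invariant is in hand, both the per-step allocation bound and the cost bound follow by routine geometric-series manipulations.
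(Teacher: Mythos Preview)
Your proposal is correct and mirrors the paper's own argument: the theorem is obtained by directly chaining Lemma~\ref{lem:data-allocation-new} (per-step allocation to $C_j$ is below $r\,n^*_j$) with Lemma~\ref{lem:data-vs-cost-new} (a per-step cap of $n$ yields $\cost(S_j) \le \frac{r}{r-1} c_j(n)$), and your sketches of both lemmas match the appendix proofs in substance. The only cosmetic difference is that the paper phrases the allocation lemma by contradiction (assume an allocation of at least $r n^*_j$ and trace back to a prior $n_j \in \{n^*_j,\dots, r n^*_j - 1\}$ where $u_j(n_j) < f^*$), whereas you argue directly that $u_j(n_j) \ge u_{i^*}(n_{i^*}) \ge f^*$ forces $n_j < n^*_j$ at every selection; these are the same invariant read in two directions.
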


In the remainder of the analysis, we will (a) study the validity of
the actual projected upper bound functions used by DAUB* and (b)
explore conditions under which $(N,\Delta)$-suboptimality of $C_j$
guarantees that $n^*_j$ is a vanishingly small fraction of $N$,
implying that DAUB* incurs a vanishingly small training cost on any
$(N,\Delta)$-suboptimal learner.

\subsubsection{Obtaining Valid Projected Upper Bounds}

If $f_i$ for $i \in [M]$ were arbitrary functions, it would clearly be
impossible to upper bound $f_i(N)$ by looking only at estimates of
$f_i(n)$ for $n < N$. Fortunately, each $f_i$ is the expected accuracy
of a learner and is thus expected to behave in a certain way. In
order to bound DAUB*'s regret, we make two assumptions on the behavior
of $f_i$. First, $f_i$ is non-decreasing, i.e., more training data
does not hurt validation accuracy. Second, $f_i$ has a
\emph{diminishing returns} property, namely, as $n$ grows, the
additional validation accuracy benefit of including more training
examples diminishes. Formally:

\begin{definition}
  \label{def:wellbehaved}
  $f: \mathbb{N} \to [0,1]$ is
  \emph{well-behaved} if it is non-decreasing and its discrete
  derivative, $f'$, is non-increasing.
\end{definition}

These assumptions on expected accuracy
are well-supported from
the PAC theory perspective.  Let $\ub_i(n)$ be the projected upper
bound function used by DAUB* for $C_i$,
namely the minimum of the \emph{training accuracy} $f^T_i(n)$ of $C_i$
at $n$ and the \emph{validation accuracy} based expression $f_i(n) +
(N-n) f'_i(n)$. For DAUB*, we treat $f'_i(n)$ as the one-sided
discrete derivative defined as $(f_i(n) - f_i(n-s))/s$ for some
parameter $s \in \mathbb{N}^+$.
 We assume the training and validation sets, $T_r$ and
$T_v$, come from the same distribution, which means $f^T_i(n)$ itself
is a valid projected upper bound. Further, we can show that if
$f_i(n)$ is well-behaved, then \emph{$\ub_i(n)$ is a valid projected
  upper bound function} (cf.~Lemma~\ref{lem:ub} in Appendix).


Thus, instead of relying on a parameterized functional form to model
$f_i(n)$, DAUB* evaluates $f_i(n)$ for certain values of $n$ and
computes an expression that is guaranteed to be a valid upper bound on
$f_i(N)$ if $f_i$ is well-behaved.

\subsubsection{Bounding Regret}

We now fix $\ub_i$ as the projected upper bound functions and explore
how $(N,\Delta)$-suboptimality and the well-behaved nature of $f_i$
together limit how large $n^*_i$ is.

\begin{definition}
  For $\Delta \in (0,1]$ and a well-behaved accuracy function $f_i$,
  define $n^\Delta_i \in \mathbb{N}$ as $N$ if $f'_i(N) > \Delta/N$
  and as $\min \{\ell \mid f'_i(\ell) \leq \Delta/N\}$ otherwise.
\end{definition}

Using first order Taylor expansion, we can prove that
$\ub_i(n^\Delta_i) < f^*$, implying $n^*_j \leq n^\Delta_j$ (cf.\
Lemma~\ref{lem:n-delta} in Appendix).  Combining this with
Theorem~\ref{thm:regret-new}, we obtain:

\begin{theorem}
  \label{thm:regret-actual}
  Let $\mathcal{C}, T_r, T_v, N, M, c_i$ and $f_i$ for $i \in [M]$ be
  as in Definition~\ref{def:problem}.  Let $r > 1, b \in \mathbb{N}^+,
  \Delta \in (0,1], C_j \in \mathcal{C}$ be an $(N,\Delta)$-suboptimal
  learner, and $S$ be the
  allocation sequence
  produced by DAUB*$(\mathcal{C}, T_r, T_v, r, b)$.  If $f_j$ and
  $f_{i^*}$ are well-behaved, then $\cost(S_j) \leq \frac{r}{r-1}
  c_j(r n^\Delta_j)$.
\end{theorem}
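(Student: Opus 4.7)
The proof is essentially a three-step composition of results already asserted in the excerpt, so I would not attempt to re-derive any of them from scratch here. First, since the hypothesis says both $f_j$ and $f_{i^*}$ are well-behaved, I would invoke Lemma~\ref{lem:ub} to conclude that the concrete upper bound expression $\ub_i(n) = \min\{f^T_i(n),\, f_i(n) + (N-n) f'_i(n)\}$ used by DAUB* satisfies Definition~\ref{def:valid-proj-ub} for $i \in \{j, i^*\}$. This puts us precisely in the hypothesis of Theorem~\ref{thm:regret-new} with $u_i := \ub_i$, whose conclusion is $\cost(S_j) \leq \frac{r}{r-1} c_j(r n^*_j)$, where $n^*_j$ (Definition~\ref{def:n-star}) is the first index at which $\ub_j$ drops strictly below the unknown target accuracy $f^*$.

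Second, I would use $(N,\Delta)$-suboptimality of $C_j$ together with Lemma~\ref{lem:n-delta}, which gives $\ub_j(n^\Delta_j) < f^*$. Combined with the minimality in the definition of $n^*_j$, this yields $n^*_j \leq n^\Delta_j$. Finally, monotonicity of $c_j$ (a direct consequence of the super-linear growth assumption $c_j(m) \geq (m/n) c_j(n)$ for $m > n$ stated at the start of Section~2) upgrades the bound in the previous step to $c_j(r n^*_j) \leq c_j(r n^\Delta_j)$, completing the proof.

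The main obstacle is therefore not in this composition but inside Lemma~\ref{lem:n-delta}, which the excerpt indicates proceeds via a first-order Taylor argument. The argument I would expect is: since $f_j$ is well-behaved, its discrete concavity implies that the linear extrapolation $f_j(n) + (N-n) f'_j(n)$ upper bounds $f_j(N)$; at $n = n^\Delta_j$ the discrete derivative has fallen to at most $\Delta/N$, so the extrapolation term contributes at most $(N - n^\Delta_j) \cdot \Delta/N < \Delta$, whereas $(N,\Delta)$-suboptimality forces $f_j(N) \leq f^* - \Delta$; thus the extrapolation stays strictly below $f^*$, and the same holds for the $f^T_j$ branch of the $\min$ by the assumption that training and validation sets come from the same distribution. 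The only subtlety worth flagging is the boundary case $n^\Delta_j = N$, in which the bound is trivial, and the need to ensure $r n^\Delta_j$ lies in the domain of $c_j$ (which can be handled by the standard cap at $N$ used throughout DAUB).
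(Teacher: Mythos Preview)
Your proposal is correct and mirrors the paper's own argument: invoke Lemma~\ref{lem:ub} (well-behaved $f_j, f_{i^*}$ $\Rightarrow$ valid $\ub_j, \ub_{i^*}$) to meet the hypothesis of Theorem~\ref{thm:regret-new}, then apply Lemma~\ref{lem:n-delta} to get $n^*_j \leq n^\Delta_j$, and finish via monotonicity of $c_j$. Your sketch of the Taylor-type argument inside Lemma~\ref{lem:n-delta} and the handling of the boundary case $n^\Delta_j = N$ also match the appendix proof.
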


The final piece of the analysis is an asymptotic bound on
$n^\Delta_i$.  To this end, we observe that the derivative of any
bounded, well-behaved, discrete function of $\mathbb{N}$ behaves
asymptotically as $\ltlOh(1/n)$ (cf.~Proposition~\ref{prop:fast-decay}
in Appendix). Applying this to $f_j$, we can prove
that if $f'_j(N) \leq \Delta/N$, then $n^\Delta_j$ is $\ltlOh(N)$ in
$N$ (cf.\ Lemma~\ref{lem:n-delta-asymptotic} in Appendix).

This leads to our main result regarding DAUB*'s regret:

\begin{theorem}[Sub-Linear Regret]
  \label{thm:regret-bound-new}
  Let $\mathcal{C}, T_r, T_v, N, M, c_i$ and $f_i$ for $i \in [M]$ be
  as in Definition~\ref{def:problem}. Let $r > 1, b \in \mathbb{N}^+,$
  and $\Delta \in (0,1]$. Let $J = \{j \mid C_j \in \mathcal{C}
  \text{\ is\ } (N,\Delta)\text{-suboptimal}\}$. For all $j \in J$,
  suppose $f_j$ is well-behaved and
  $f'_j(N) \leq \Delta/N$. If DAUB*$(\mathcal{C}, T_r, T_v, r, b)$
  outputs $S$ as the training data allocation sequence along with a
  selected learner $C_{\tilde{i}}$ trained on all of $T_r$, then:
  \begin{enumerate}
    \item $\cost(S_{\tilde{i}}) \leq \frac{r}{r-1} c_{\tilde{i}}(N)$;

    \item $(N,\Delta)$-regret of $S$ is $\ltlOh(\sum_{j \in J}
      c_j(N))$ in $N$; and

    \item If $c_j(n) = \bigOh(c_{\tilde{i}}(n))$ for all $j \in J$,
      then the $(N,\Delta)$-regret of $S$ is
      $\ltlOh(M \cdot \cost(S_{\tilde{i}}))$ in $N$.
  \end{enumerate}
\end{theorem}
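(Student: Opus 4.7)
The plan is to prove the three parts separately, combining Theorems~\ref{thm:regret-new} and~\ref{thm:regret-actual} with simple arithmetic derived from the super-linearity property $c_i(m) \geq (m/n)\, c_i(n)$ of cost functions. The only real content beyond direct application of earlier results is lifting a sub-linear data bound $n^\Delta_j = \ltlOh(N)$ to the corresponding sub-linear cost bound $c_j(r n^\Delta_j) = \ltlOh(c_j(N))$.

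For part~1, I would observe that $\tilde{i}$ is the learner to which DAUB* allocated all $N$ examples, so the induced subsequence $S_{\tilde{i}}$ consists of training rounds at sizes $b, br, br^2, \ldots$ up through $N$. Writing $br^K$ for the largest allocation strictly below $N$ (or equal to $N$) and applying $c_{\tilde{i}}(m) \geq (m/n)\, c_{\tilde{i}}(n)$ with $m=N$ and $n = br^k$ yields $c_{\tilde{i}}(br^k) \leq c_{\tilde{i}}(N)/r^{K-k}$. Summing the resulting geometric series bounds $\cost(S_{\tilde{i}})$ by $c_{\tilde{i}}(N)\sum_{\ell\geq 0} r^{-\ell} = \tfrac{r}{r-1} c_{\tilde{i}}(N)$, which is exactly the content of Lemma~\ref{lem:data-vs-cost-new} applied with the per-step allocation ceiling set to $N$.

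For part~2, I would apply Theorem~\ref{thm:regret-actual} to each $j \in J$: since $f_j$ is well-behaved and $C_j$ is $(N,\Delta)$-suboptimal, $\cost(S_j) \leq \tfrac{r}{r-1}\, c_j(r n^\Delta_j)$. The additional hypothesis $f'_j(N) \leq \Delta/N$ together with Lemma~\ref{lem:n-delta-asymptotic} gives $n^\Delta_j = \ltlOh(N)$ as $N \to \infty$. To transfer this to cost, I would invoke super-linearity in the form ``$c_j(n)/n$ is non-decreasing'' with $n = r n^\Delta_j$ and $m = N$, obtaining $c_j(r n^\Delta_j) \leq c_j(N)\cdot (r n^\Delta_j)/N$, where the latter ratio vanishes with $N$. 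Hence $\cost(S_j) = \ltlOh(c_j(N))$, and summing over the (finitely many) $j \in J$ preserves the little-$o$ statement, giving $(N,\Delta)$-regret $= \ltlOh(\sum_{j\in J} c_j(N))$.

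For part~3, I would use the hypothesis $c_j(n) = \bigOh(c_{\tilde{i}}(n))$ for all $j \in J$ to bound $\sum_{j\in J} c_j(N) = \bigOh(|J|\cdot c_{\tilde{i}}(N)) = \bigOh(M\cdot c_{\tilde{i}}(N))$. Since $S$ contains $(\tilde{i},N)$, we trivially have $c_{\tilde{i}}(N) \leq \cost(S_{\tilde{i}})$, so $\bigOh(M\cdot c_{\tilde{i}}(N)) = \bigOh(M\cdot \cost(S_{\tilde{i}}))$; chaining with part~2 yields $(N,\Delta)$-regret $= \ltlOh(M\cdot \cost(S_{\tilde{i}}))$. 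The main obstacle, such as it is, is procedural: keeping $M, r, b, \Delta$ and the collection $\mathcal{C}$ fixed as $N \to \infty$ so that the little-$o$ asymptotics are interpreted uniformly, and verifying that the super-linearity step is strong enough that the ratio $n^\Delta_j/N \to 0$ really does propagate through $c_j$; everything else is substitution.
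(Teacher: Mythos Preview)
Your proposal is correct and follows essentially the same route as the paper's own proof: Part~1 via Lemma~\ref{lem:data-vs-cost-new} with per-step cap $N$ (you re-derive the geometric sum, the paper simply cites the lemma), Part~2 by combining Theorem~\ref{thm:regret-actual} with Lemma~\ref{lem:n-delta-asymptotic} and the super-linearity inequality $c_j(r n^\Delta_j) \le (r n^\Delta_j/N)\, c_j(N)$, and Part~3 by the same $c_{\tilde{i}}(N) \le \cost(S_{\tilde{i}})$ chaining. The only cosmetic difference is that the paper states $n^\Delta_j = \ltlOh(N/\Delta)$ before absorbing $\Delta$ into the asymptotic constants, whereas you absorb it immediately.
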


Thus, DAUB* successfully solves the cost-sensitive training data
allocation problem whenever for $j \in J$,
$f_j$ is well-behaved
and $c_j(n) = \bigOh(c_{\tilde{i}}(n))$,
i.e., training any suboptimal learner is asymptotically not any costlier
than training an optimal learner. While more refined versions of
this result can be generated, the necessity of an assumption on the
cost function is clear: if a suboptimal learner $C_j$ was arbitrarily
costlier to train than optimal learners, then, in order to
guarantee near-optimality, one must incur a significant misallocation
cost training $C_j$ on some reasonable subset of $T_r$ in order to
ascertain that $C_j$ is in fact suboptimal.


\subsubsection{Tightness of Bounds}

The cost bound on misallocated data in Theorem~\ref{thm:regret-actual}
in terms of $n^\Delta_i$ is in fact tight (up to a constant factor) in
the worst case, unless further assumptions are made about the accuracy
functions. In particular, every algorithm that guarantees
$(N,\Delta)$-optimality without further assumptions must, in the worst
case, incur a cost of the order of $c_j(n^\Delta_j)$ for every
suboptimal $C_j \in \mathcal{C}$
(cf.~Theorem~\ref{thm:data-allocation-LB} in Appendix for a formal
statement):

\begin{theorem}[Lower Bound, informal statement]
  \label{thm:data-allocation-LB-informal}
  Let $\Delta \in (0,1]$ and $\mathcal{A}$ be a training data
  allocation algorithm that always outputs an $(N,\Delta)$-optimal
  learner. Then there exists an $(N,\Delta)$-suboptimal learner $C_j$
  that would force $\mathcal{A}$ to incur a misallocated training cost
  larger than $ c_j(n^\Delta_j) / 2$.
\end{theorem}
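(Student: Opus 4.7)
The plan is to prove the lower bound by a classical indistinguishability argument: for any allocation algorithm $\mathcal{A}$ I would exhibit a pair of problem instances that $\mathcal{A}$ cannot tell apart without sampling $C_j$ on a large prefix of $T_r$, yet that demand opposite outputs in order to satisfy the $(N,\Delta)$-optimality guarantee.

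For the construction I take two learners $C_j$ and $C_{i^*}$. I fix $f_{i^*}$ to be a well-behaved accuracy function with $f_{i^*}(N) = f^*$, and set $c_j(n) = n$ (the extremal linear cost allowed by the model, chosen to make the bound clean). I then build two candidate well-behaved functions $f_j^A$ and $f_j^B$ that agree pointwise on $\{1, 2, \ldots, n^\Delta_j - 1\}$ but diverge thereafter. In instance $A$, $f_j^A$ continues to grow at slope at least $\Delta/N$ past $n^\Delta_j$ so that $f_j^A(N) \geq f^* + \Delta$, making $C_j$ the unique $(N,\Delta)$-optimal learner. In instance $B$, the derivative of $f_j^B$ drops to $0$ at $n^\Delta_j$ so that $f_j^B(N) \leq f^* - \Delta$, making $C_j$ $(N,\Delta)$-suboptimal. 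Both extensions stay well-behaved because the discrete derivative of $f_j$ is at the threshold $\Delta/N$ just before $n^\Delta_j$, so either holding it constant (in $A$) or dropping it to $0$ (in $B$) preserves monotonicity of the derivative. Under this construction, $n^\Delta_j$ computed with respect to $f_j^B$ equals the chosen divergence point, matching the $n^\Delta_j$ referenced in the theorem statement.

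The indistinguishability step is the heart of the argument. If $\mathcal{A}$ never trains $C_j$ on more than $n^\Delta_j - 1$ examples, then its entire sequence of training and validation observations coincides across the two instances, because $f_j^A = f_j^B$ on the sampled prefix and $f_{i^*}$ is identical in both. Consequently $\mathcal{A}$ is forced to produce the same output in both instances. But the sets of $(N,\Delta)$-optimal learners are disjoint across the two: in $A$, correctness demands outputting $C_j$, while in $B$, correctness forbids $C_j$. This contradiction forces $\mathcal{A}$ to train $C_j$ on at least $n^\Delta_j$ examples at some point, and by the same indistinguishability this must occur in both instances. In instance $B$, where $C_j$ is $(N,\Delta)$-suboptimal, the cost accumulated on $C_j$ is therefore at least $c_j(n^\Delta_j)$, which is strictly greater than $c_j(n^\Delta_j)/2$, proving the claim.

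The main obstacle I expect is engineering $f_j^A$ and $f_j^B$ to be simultaneously well-behaved over all of $[N]$, to agree on a prefix of length essentially $n^\Delta_j$, and to straddle the $(N,\Delta)$-threshold at $n = N$. The factor-of-two slack in the conclusion is convenient here: it absorbs any off-by-one wiggle in where one chooses to place the divergence point, and it accommodates the fact that the cost lower bound is obtained directly through $c_j(n^\Delta_j)$ rather than through the harder-to-control quantity $c_j(m)$ for $m$ strictly between $n^\Delta_j/2$ and $n^\Delta_j$ under super-linear cost growth.
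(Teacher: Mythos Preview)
Your indistinguishability plan is the right genre and is essentially the framework the paper uses, but the arithmetic of your construction does not close. You say the discrete derivative of $f_j$ is ``at the threshold $\Delta/N$ just before $n^\Delta_j$'' and then, in instance $A$, hold it there. Well-behavedness forces $(f_j^A)'(n) \le f_j'(n^\Delta_j-1)\approx\Delta/N$ for all $n \geq n^\Delta_j$, so
\[
  f_j^A(N) - f_j^A(n^\Delta_j-1) \;\le\; (N-n^\Delta_j+1)\cdot\frac{\Delta}{N} \;<\; \Delta,
\]
and since $f_j^B$ flatlines after the branch point you get $f_j^A(N)-f_j^B(N)<\Delta$. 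But your two-instance argument needs a $2\Delta$ spread: you require $f_j^A(N)\ge f^*+\Delta$ (so that $C_{i^*}$ becomes $(N,\Delta)$-suboptimal in instance $A$ and the output \emph{must} be $C_j$) together with $f_j^B(N)\le f^*-\Delta$. With a sub-$\Delta$ spread, $C_{i^*}$ remains $(N,\Delta)$-optimal in \emph{both} instances, so $\mathcal A$ can lawfully output $C_{i^*}$ in both without ever probing $C_j$ past the prefix, and no contradiction arises. Note that nothing in the definition of $n^\Delta_j$ forces the derivative at $n^\Delta_j-1$ to be near the threshold; that is a choice you made, and it is the choice that breaks the argument.

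The paper confronts exactly this derivative-headroom obstacle and resolves it by pulling the branch point back from $n^\Delta_j$ to $\gamma\, n^\Delta_j$ with $\gamma=(\sqrt5-1)/2$. Working with the explicit family $f(n)=1-c\Delta/n$, whose derivative decays like $1/n^2$, the slope at $\gamma\, n^\Delta_j$ is $\Delta/(\gamma^2 N)$; the golden-ratio identity $\gamma^2=1-\gamma$ is precisely the condition that lets the growing branch accumulate the required separation while keeping both branches well-behaved and leaving $n^\Delta_j$ itself unchanged. This is why the informal statement carries the constant $1/2$ (a rounding-down of $\gamma\approx0.618$) rather than the $1$ you are implicitly targeting: placing the divergence at $n^\Delta_j$ leaves no slope to spare, and the factor-of-two slack in the conclusion is not there to absorb this---it is the theorem's constant, not construction wiggle room.
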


\section{Experiments}

Our experiments make use of 41 classifiers covering a wide range of
algorithms (SVMs, Decision Trees, Neural Networks, Logistic Regression, etc.)
as implemented in WEKA~\cite{weka}.
All experiments were conducted on AMD Opteron 6134
machines with 32 cores and 64 GB memory, running Scientific Linux
6.1.\footnote{Code and data, including full parameterization
  for each classifier, are available from the authors.}


We first evaluate DAUB on one real-world binary classification dataset,
``Higgs boson''~\cite{HiggsDataset2014} and one artificial dataset,
``Parity with distractors,'' to examine robustness of DAUB's strategy
across two extremely different types of data.  In the latter task the
class label is the parity of a (hidden) subset of binary features---the
remaining features serve as distractors, with no influence on the class
label. We generated 65,535 distinct examples based on 5-bit parity with
11 distractors, and randomly selected 21,500 samples each for $T_r$ and $T_v$.
For the Higgs and other real-world datasets, we first randomly split
the data with a 70/30 ratio and selected 38,500
samples for $T_r$ from the 70\% split and use the 30\% as $T_v$ .
We coarsely optimized the DAUB parameters at $b=500$ and $r=1.5$ based on
the Higgs data, and kept those values fixed for all datasets.  This yielded
11 possible allocation sizes:
500, 1000, 1500, 2500, 4000, 5000, 7500, 11500, 17500, 25500,
38500\footnote{Unfortunately some of our classifiers crash and/or
run out of memory above 38500 samples.}.

\begin{table}[htb]
 \caption{\label{tab:performance} Comparison of full training, DAUB
    without training accuracy bound $f^T$, and DAUB.}  
  \setlength{\doublerulesep}{\arrayrulewidth}
  \setlength{\tabcolsep}{1ex}
  \centering
       \begin{tabular}{| c | c c c | c c c |}
         \hline
				  & \multicolumn{3}{c|}{HIGGS} & \multicolumn{3}{c|}{PARITY} \\
				  & Full & no $f^T$ & DAUB & Full & no $f^T$ & DAUB \\	\hline \hline
        Iterations		& 41 & 105 & 56 & 41 & 26 & 19 \\
        Allocation  & 1,578k  & 590k & {\bf 372k}   & 860k  & 171k & {\bf 156k}  \\
        Time (sec)	& 49,905 & 17,186 & {\bf 2,001} & 5,939 & 617 &{\bf 397} \\
        \hline
\end{tabular}
\end{table}



Results for HIGGS and PARITY are as follows.  The accuracy loss of the
ultimate classifiers selected by DAUB turned out to be quite small:
DAUB selected the top classifier for HIGGS (i.e. 0.0\% loss) and one
of the top three classifiers for PARITY (0.3\% loss).
In terms of complexity reduction,
Table~\ref{tab:performance} shows clear gains over ``full'' training
of all classifiers on the full $T_r$, in both total allocated
samples as well total CPU training time, for both standard DAUB as well
as a variant which does not use training set accuracy $f^T$ as an
upper bound.  Both variants reduce the allocated samples by $\sim$2x-4x for
HIGGS, and by $\sim$5x for PARITY.
The impact on CPU runtime is more pronounced,
as many sub-optimal classifiers with supra-linear runtimes
receive very small amounts of training data.
As the table shows, standard DAUB reduces total training time
by a factor of $\sim$25x for HIGGS, and $\sim$15x for PARITY.

\begin{figure}[htb]
\centering
  \includegraphics[width=0.55\textwidth]{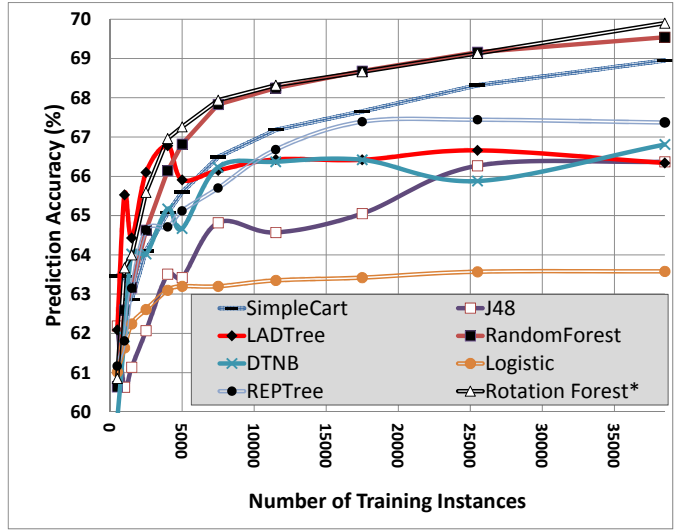}
  \caption{Higgs validation accuracy curves of several classifiers that initially outperform the classifier (Rotation Forest) that is ultimately the best when given all training data.}
  \label{FIG:HiggsPerformance}
\end{figure}

Figures~\ref{FIG:HiggsPerformance} and~\ref{FIG:DaubAllocationHiggs}
provide additional insight into DAUB's behavior.
Figure~\ref{FIG:HiggsPerformance} shows how validation accuracy progresses
with increasing training data allocation to several classifiers on the
HIGGS dataset.  The plots for the most part conform to our ideal-case
assumptions of increasing accuracy with diminishing slope,
barring a few monotonicity glitches\footnote{
In our experience, most of these glitches pertain to weak classifiers
and thus would not significantly affect DAUB, since DAUB mostly focuses
its effort on the strongest classifiers.}
due to stochastic sampling noise.  
We note that, while there is one optimal classifier $C^*$ (a parameterization
of a Rotation Forest) with best validation accuracy after training
on all of $T_r$, there are several other 
classifiers that outperformed $C^*$ in early training.
For instance, LADTree is better than $C^*$ until 
5,000 examples but then flattens out.

\begin{figure}[htb]
  \centering
  \includegraphics[width=0.6\textwidth]{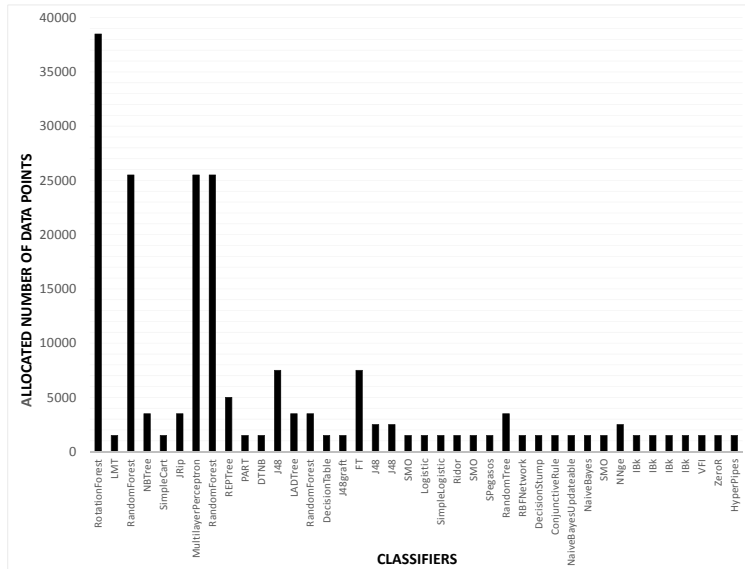}
  \caption{Number of Higgs examples allocated by DAUB to various
  classifiers, sorted by decreasing validation performance at $N=38500$.}
  \label{FIG:DaubAllocationHiggs}
\end{figure}

 Figure~\ref{FIG:DaubAllocationHiggs} gives perspective on how DAUB
distributes data allocations among the 41 classifiers when run on the
HIGGS dataset. The classifiers here are sorted by decreasing validation
accuracy $f_i(N)$.
While DAUB manages to select $C^*$ in this case, what's equally critical
is the distribution of allocated training data. The figure shows that
DAUB allocates most of the training data to the top eight classifiers.
Most classifiers receive 2500 or fewer samples, and only four
classifiers receive more than 10k samples, with all of them within
1.5\% of the optimal performance.

\begin{table*}[htb]
  \setlength{\doublerulesep}{\arrayrulewidth}
  \setlength{\tabcolsep}{0.9ex}
  \centering
  \begin{tabular}{| l l || c  r | c c r c c |}
    \hline
     & & \multicolumn{2}{c|}{Full Training} & \multicolumn{5}{c|}{DAUB} \\
    Dataset & Application Area & Alloc. & Time (s) & Iter. & Alloc. & Time (s) & Speedup & Loss \\
    \hline
    Buzz                  & social media & 1,578k & 56,519 & 57 & 302k & 5,872 & 10x & 0.0\% \\
    Cover Type        & forestry & 1,578k & 43,578 & 13 & 160k & 3,848 & 11x & 1.1\% \\
    HIGGS                & signal processing & 1,578k & 49,905 & 56 & 372k & 2,001 & 25x & 0.0\% \\
    Million Songs    & music & 1,578k & 115,911 & 53 & 333k & 17,208 & \ \ 7x & 0.6\% \\
    SUSY                  & high-energy physics & 1,578k & 26,438 & 31 & 214k & 837 & 31x & 0.9\% \\
    Vehicle Sensing & vehicle mgmt. & 1,578k & 68,139 & 50 & 296k & 5,603 & 12x & 0.0\% \\
    \hline
  \end{tabular}
  \caption{\label{tab:performancebuzz} Comparison of full training and DAUB
  across six benchmarks.}  
\end{table*}

Finally, in Table~\ref{tab:performancebuzz} we report results of DAUB
on Higgs plus five other real-world benchmarks as indicated: 
Buzz~\cite{buzz};
Covertype~\cite{covtype};
Million Song Dataset~\cite{Bertin-Mahieux2011}; 
SUSY~\cite{HiggsDataset2014}; and
Vehicle-SensIT~\cite{vehicle}.
These experiments use exactly the same parameter settings
as for HIGGS and PARITY.
As before, the table shows a comparison in terms of allocated training
samples and runtime. In addition it displays the incurred accuracy loss
of DAUB's final selected classifier. The highest loss is $\sim$1\%,
well within an acceptable range. The average incurred loss across all
six benchmarks is 0.4\% and the average speedup is 16x. Our
empirical findings thus show that in practice DAUB can consistently select
near-optimal classifiers at a substantial reduced computational cost when
compared to full training of all classifiers.

\section{Conclusion}

We reiterate the potential practical impact of
our original \emph{Cost-Sensitive Training Data Allocation} problem
formulation, and our proposed DAUB algorithm for solving
this problem.  In our experience, DAUB has been quite easy to use,
easy to code and tune, and is highly practical in robustly finding
near-optimal learners with greatly reduced CPU time across
datasets drawn from a variety of
real-world domains.  Moreover, it does not require built-in knowledge
of learners or properties of datasets, making it ideally suited for
practitioners without domain knowledge of the learning algorithms
or data characteristics. Furthermore,
all intermediate results can be used to interactively inform the 
practitioner of relevant information such as progress (e.g., updated
learning curves) and decisions taken (e.g., allocated data). Such a
tool was introduced by~\namecite{TowardsCADS-2015} and a
snapshot of it is depicted in Figure~\ref{FIG:DaubTool}.

\begin{figure}[htb]
  \centering
  \includegraphics[width=0.7\textwidth]{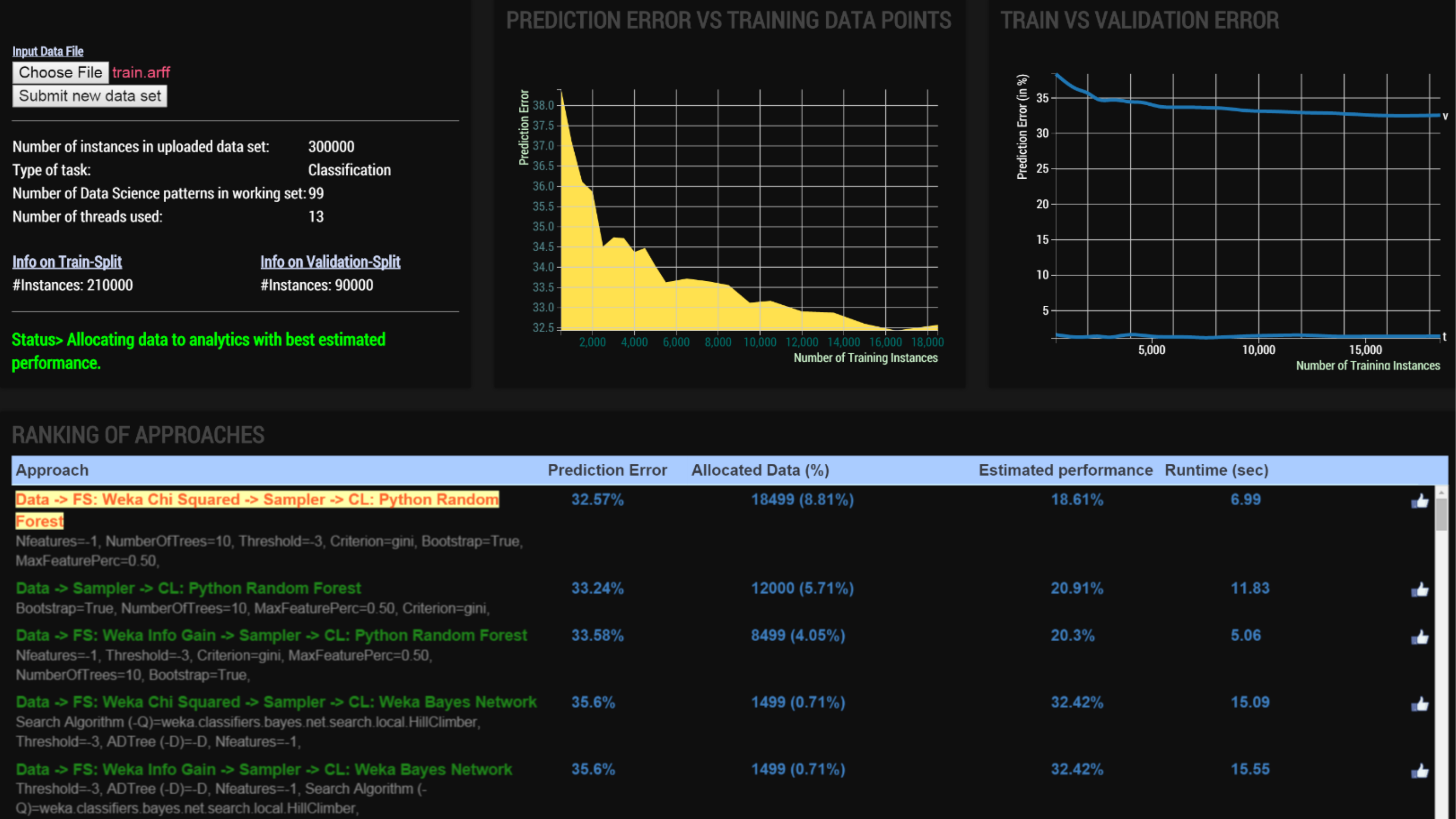}
  \caption{Implementation of a practitioners tool based on DAUB. The interactive dashboard shows the
   performance of various learners in terms of loss on the validation set based on training on currently allocated training data points. For each learner there are continuously updated plots that show the learning curve and a comparison of training and validation performance.}
  \label{FIG:DaubTool}
\end{figure}

 Our theoretical work on the idealized DAUB* scenario also reveals
novel insights and provides important support for the real-world
behavior of DAUB with noisy accuracy estimates.  As dataset sizes
scale, we expect that DAUB will better and better approach the
idealized behavior of DAUB*, which offers strong bounds on
both learner sub-optimality as well as regret due to misallocated
samples.

There are many opportunities for further advances in both the
theoretical and practical aspects of this work.
It should be possible
to develop more accurate bound estimators given noisy accuracy estimates,
e.g., using monotone spline regression.
Likewise, it may be possible to extend the theory to
encompass noisy accuracy estimates, for example, by making use of
PAC lower bounds on generalization error to establish upper bounds
on learner accuracy.
DAUB could be further combined in an interesting way with
methods~\cite{ActiveLEarningWithModelSelection}
to optimally split data between training and validation sets.




\begin{small}

\end{small}


\appendix

\section{Appendix: Proof Details}

\begin{lemma}
  \label{lem:data-allocation-new}
  Let $\mathcal{C}, T_r, T_v, N, M,$ and $f_i$ for $i \in [M]$ be as
  in Definition~\ref{def:problem}. Let $r > 1$ and $b \in
  \mathbb{N}^+$.  If the projected upper bound functions $u_j$ and
  $u_{i*}$ used by DAUB*$(\mathcal{C}, T_r, T_v, r, b)$ are valid, then
  it allocates to $C_j$ fewer than $r n^*_j$ examples in each step and
  $\frac{r^2}{r-1} n^*_j$ examples in total.
\end{lemma}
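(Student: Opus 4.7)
The plan is to exploit the validity of $u_{i^*}$ to establish a persistent floor on $\max_i u_i$, and then use DAUB*'s argmax selection rule to translate that floor into a ceiling on $C_j$'s allocation. First I would observe that since $u_{i^*}$ is a valid projected upper bound, $u_{i^*}(n) \geq f_{i^*}(N) = f^*$ at every evaluation point $n$ that DAUB* uses for $C_{i^*}$. Consequently, at every moment of execution, $\max_i u_i \geq u_{i^*}(n_{i^*}) \geq f^*$, where $n_{i^*}$ denotes $C_{i^*}$'s current allocation.

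Next, I would consider any main-loop iteration in which $C_j$ is chosen, and let $n_j^{\mathrm{prev}}$ denote $C_j$'s allocation just before that iteration. The argmax rule gives $u_j(n_j^{\mathrm{prev}}) \geq \max_i u_i \geq f^*$. Monotonicity of $u_j$ on evaluation points, together with $n^*_j$ being the smallest evaluation point at which $u_j$ drops below $f^*$, then forces $n_j^{\mathrm{prev}} < n^*_j$, i.e.\ $n_j^{\mathrm{prev}} \leq n^*_j - 1$. The new allocation therefore satisfies $n_j^{\mathrm{new}} = \lceil r\, n_j^{\mathrm{prev}}\rceil \leq r\, n_j^{\mathrm{prev}} + 1 \leq r(n^*_j - 1) + 1 = r n^*_j - (r-1) < r n^*_j$, using $r > 1$. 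For the bootstrap allocations $b, br, br^2$, I would note that because $u_j$ is first evaluated at $br^2$, we automatically have $n^*_j \geq br^2$, so each bootstrap value is itself at most $n^*_j < r n^*_j$.

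For the total, let $a_0, a_1, \ldots, a_K$ enumerate the successive allocations DAUB* makes to $C_j$ (the three bootstrap values followed by main-loop updates $a_i = \lceil r\, a_{i-1}\rceil$). Because $a_i \geq r\, a_{i-1}$ for every $i \geq 1$, we get $a_i \leq a_K / r^{K-i}$, so the geometric sum
\[
   \sum_{i=0}^{K} a_i \;\leq\; a_K \sum_{i=0}^{K} r^{-(K-i)} \;<\; a_K \cdot \frac{r}{r-1} \;<\; r n^*_j \cdot \frac{r}{r-1} \;=\; \frac{r^2}{r-1}\, n^*_j
\]
follows by invoking the per-step bound $a_K < r n^*_j$ on the last term. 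The main technical obstacle is the mildly delicate ceiling-vs-inequality bookkeeping in the per-step step, where the $r > 1$ slack is precisely what absorbs the $+1$ coming from $\lceil r\, n_j^{\mathrm{prev}}\rceil$; everything else is structural and follows directly from the algorithm's design together with the validity hypothesis.
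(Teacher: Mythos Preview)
Your argument is correct and essentially matches the paper's: both rest on the observation that $u_{i^*} \geq f^*$ at every stage, so $C_j$ can be selected only while $u_j(n_j) \geq f^*$ (hence $n_j < n^*_j$), and both close with the same geometric sum for the total allocation. The only differences are cosmetic---the paper casts the per-step bound as a proof by contradiction and treats the allocation sequence as an exact geometric progression $b, br, \ldots, br^k$ without ceilings, whereas you argue directly and absorb the $\lceil\cdot\rceil$ using the $r>1$ slack; your aside that ``$n^*_j \geq br^2$'' slightly misreads Definition~\ref{def:n-star} (which takes the minimum over all $\ell\in\mathbb{N}$, not just DAUB*'s evaluation points), but the paper's own proof tacitly sidesteps the same bootstrap edge case.
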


\begin{proof}[Proof of Lemma~\ref{lem:data-allocation-new}]
  Suppose, for the sake of contradiction, that DAUB* allocates at least
  $r n^*_j$ examples to learner $C_j$ at some point in its
  execution. Since $r > 1$ and all allocation sizes are at most $N$,
  $n^*_j < N$. Further, since $n_j$, the number of
  examples allocated to $C_j$, is always incremented geometrically by
  a factor of at most $r$, at some previous point in the algorithm, we
  must have $n_j \in \{n^*_j, \ldots, r n^*_j - 1\}$. Since the
  projected upper bound function $u_j$ is non-increasing, at that
  previous point in the algorithm, $u_j(n_j) \leq u_j(n^*_j) < f^*$ by
  the definition of $n^*_j$. On the other hand, since the projected
  upper bound function $u_{i^*}$ is valid, the projected upper bound
  for $C_{i^*}$ would always be at least $f^*$.

  Therefore, the algorithm, when choosing which
  learner to allocate the next set of examples to, will, from this point
  onward, always prefer $C_{i^*}$ (and possibly another
  learner appearing to be even better) over $C_j$, implying that
  $n_j$ will never exceed its current value. This contradicts the
  assumption that DAUB* allocates at least $r n^*_j$ examples to $C_j$
  at some point during its execution.

  For the bound on the total number of examples allocated to $C_j$,
  let $k = \lfloor \log_r \frac{r n^*_j}{b} \rfloor$. Since DAUB*
  allocates fewer than $r n^*_j$ examples to $C_j$ in any single step
  and the allocation sizes start at $b$ and increase by a factor of
  $r$ in each step, $C_j$ must have received precisely $b + b r + b
  r^2 + \ldots + b r^k$ examples in total. This is smaller than
  $\frac{r}{r-1} b r^k \leq \frac{r^2}{r-1} n^*_j$.
\end{proof}

\begin{lemma}
  \label{lem:data-vs-cost-new}
  Let $\mathcal{C}, T_r, T_v, N, M,$ and $c_i$ for $i \in [M]$ be as
  in Definition~\ref{def:problem}. Let $r > 1, b \in \mathbb{N}^+,$
  and $S$ be the training data allocation sequence produced by
  DAUB*$(\mathcal{C}, T_r, T_v, r, b)$. If DAUB* allocates at most $n$
  training examples to a learner $C_j \in \mathcal{C}$ in each
  step, then $\cost(S_j) \leq \frac{r}{r-1} c_j(n)$.
\end{lemma}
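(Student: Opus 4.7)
The plan is to exploit the geometric structure of DAUB*'s allocations to any single learner, combined with the super-linearity of the cost function $c_j$ stated in the problem setup ($c_j(m) \geq (m/\ell) c_j(\ell)$ for $m > \ell$, which also implies $c_j$ is non-decreasing).

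First, I would enumerate the successive allocations made to $C_j$. From the pseudocode, the bootstrap phase allocates $b, br, br^2$ samples; afterwards the main loop updates $n_j \leftarrow \min\{\lceil r n_j \rceil, N\}$ whenever $C_j$ is selected. Thus the induced subsequence $S_j$ consists of allocation sizes $n_j^{(0)} < n_j^{(1)} < \cdots < n_j^{(k)}$ with consecutive ratio at least $r$ (apart from ceiling slack at the top, which only helps). Let $n_j^{(k)} = n_j^{\max}$; by hypothesis $n_j^{\max} \leq n$.

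Second, I would apply the super-linearity assumption in the direction that controls \emph{smaller} allocations. For each $i \leq k$, since $n_j^{(i)} \leq n_j^{(k)}$, the inequality $c_j(n_j^{(k)}) \geq (n_j^{(k)}/n_j^{(i)}) c_j(n_j^{(i)})$ rearranges to $c_j(n_j^{(i)}) \leq (n_j^{(i)}/n_j^{(k)}) c_j(n_j^{(k)}) \leq r^{i-k} c_j(n_j^{\max})$, using that the ratios between consecutive terms are at least $r$. Summing over $i$ yields a geometric series bounded by
\[
  \cost(S_j) \;=\; \sum_{i=0}^{k} c_j(n_j^{(i)}) \;\leq\; c_j(n_j^{\max}) \sum_{i=0}^{k} r^{i-k} \;<\; \frac{r}{r-1}\, c_j(n_j^{\max}).
\]
Finally, since $c_j$ is non-decreasing and $n_j^{\max} \leq n$, I obtain $\cost(S_j) \leq \frac{r}{r-1} c_j(n)$ as required.

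I expect the proof itself to be almost entirely bookkeeping; there is no genuinely hard step. The only subtlety to flag is that the ceiling operation $\lceil r n_j \rceil$ and the cap at $N$ perturb the pure geometric structure, but both effects work in our favor: ceilings make successive ratios at least $r$ (strengthening the dominance of the largest term in the geometric sum), and capping at $N$ merely terminates the sequence. A brief remark to that effect will suffice before invoking the geometric-sum bound.
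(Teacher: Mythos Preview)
Your proposal is correct and follows essentially the same route as the paper: enumerate the (geometrically increasing) allocations to $C_j$, use the super-linearity assumption on $c_j$ to bound each term by a power of $1/r$ times the largest cost, sum the resulting geometric series to get the $\frac{r}{r-1}$ factor, and then use monotonicity of $c_j$ to pass from $c_j(n_j^{\max})$ to $c_j(n)$. The paper in fact writes the allocations as exactly $b, br, \ldots, br^k$ without discussing the ceiling or the cap at $N$, so your brief remark on those perturbations is, if anything, slightly more careful than the original.
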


\begin{proof}[Proof of Lemma~\ref{lem:data-vs-cost-new}]
  As in the proof of Lemma~\ref{lem:data-allocation-new}, let $k = \lfloor
  \log_r (n/b) \rfloor$ and observe that the data allocation
  subsequence $S_j$ for learner $C_j$ must have been $(b, b r, b
  r^2, \ldots, b r^k)$. The corresponding training cost for $C_j$ is
  $\cost(S_j) = c_j(b) + c_j(b r) + c_j(b r^2) + \ldots c_j(b
  r^k)$. By the assumption that $c_j$ grows at least linearly:
$
    c_j(b r^\ell) \leq \frac{c_j(b r^k)}{r^{k-\ell}}
$
  for $\ell \leq k$. It follows that:
 \begin{align*}
    \cost(S_j)
    & \leq c_j(b r^k) \cdot (r^{-k} + r^{-k+1} + r^{-k+2} + \ldots + 1) \\
    & < \frac{r}{r-1} c_j(b r^k) 
        \leq \frac{r}{r-1} c_j(n)
\end{align*}
This finishes the proof.
\end{proof}

\begin{lemma}
  \label{lem:ub}
  If $f_i(n)$ is well-behaved, then $\ub_i(n)$ is a valid projected
  upper bound function.
\end{lemma}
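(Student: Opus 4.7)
The plan is to show that each of the two quantities inside the $\min$ defining $\ub_i(n) = \min\{f^T_i(n),\, f_i(n) + (N-n) f'_i(n)\}$ is itself at least $f_i(N)$; since the minimum of two quantities bounded below by $f_i(N)$ is also bounded below by $f_i(N)$, this gives the result.

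For the training-accuracy term $f^T_i(n)$, I would simply invoke the assumption stated right before the lemma that $T_r$ and $T_v$ come from the same distribution. Under this assumption, the empirical accuracy on the training sample stochastically dominates the out-of-sample accuracy on $T_v$, so in expectation $f^T_i(n) \geq f_i(N)$. This is essentially the standard ``training accuracy is an optimistic estimate of test accuracy'' fact and requires no use of well-behavedness.

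The substance of the proof is the bound $f_i(n) + (N-n) f'_i(n) \geq f_i(N)$, which is the discrete analog of the ``concave function lies below its tangent line'' fact. Let $g_i(k) := f_i(k) - f_i(k-1)$ denote the one-step discrete derivative; by well-behavedness, $g_i$ is non-increasing. Writing $f_i(N) - f_i(n) = \sum_{k=n+1}^{N} g_i(k)$ as a telescoping sum and using $g_i(k) \leq g_i(n)$ for each $k > n$, I obtain $f_i(N) - f_i(n) \leq (N-n)\, g_i(n)$. It then remains to relate $g_i(n)$ to the discrete derivative $f'_i(n) = (f_i(n) - f_i(n-s))/s$ that DAUB$^*$ actually uses. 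But $f'_i(n)$ is the average of $g_i(n-s+1), \ldots, g_i(n)$, and each of these is at least $g_i(n)$ by monotonicity of $g_i$, so $f'_i(n) \geq g_i(n)$. Chaining the two inequalities yields $f_i(N) \leq f_i(n) + (N-n) f'_i(n)$, as desired.

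I do not anticipate a real obstacle: the only point requiring care is the bookkeeping that reconciles the backward-averaged derivative $f'_i$ (window size $s$) used by the algorithm with the one-step derivative $g_i$ appearing naturally in the telescoping sum; the observation $f'_i(n) \geq g_i(n)$, itself an immediate consequence of well-behavedness, bridges the two and makes the discrete tangent bound go through uniformly in $s$.
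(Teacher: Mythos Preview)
Your argument is correct. Both proofs handle the training-accuracy term $f^T_i(n)$ in the same way (by appeal to the same-distribution assumption) and both hinge on the observation that the $s$-window backward derivative $f'_i(n)$ dominates the one-step derivative $g_i(n)=f_i(n)-f_i(n-1)$, since $f'_i(n)$ is the average of $g_i(n-s+1),\ldots,g_i(n)$ and $g_i$ is non-increasing.

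Where you differ from the paper is in how you establish the tangent-line inequality $f_i(n)+(N-n)f'_i(n)\geq f_i(N)$. You go directly: telescope $f_i(N)-f_i(n)=\sum_{k=n+1}^{N}g_i(k)$, bound each summand by $g_i(n)$, and then replace $g_i(n)$ by $f'_i(n)$. The paper instead proves the stronger intermediate fact that the function $g(n)=f_i(n)+(N-n)f'_i(n)$ is \emph{non-increasing} in $n$; since $g(N)=f_i(N)$, the desired inequality follows. Your route is shorter and more transparent for the lemma as stated. The paper's route yields an extra byproduct, monotonicity of $\ub_i$, which is used elsewhere in the analysis (the footnote after Definition~\ref{def:n-star} assumes $u_i$ is non-increasing at the points where DAUB* evaluates it). Your proof does not supply that monotonicity, but it is not required by Definition~\ref{def:valid-proj-ub} and hence not by the lemma itself.
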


\begin{proof}[Proof of Lemma~\ref{lem:ub}]
  Recall that $\ub_i(n)$ is the minimum of $f^T_i(n)$ and $f_i(n) +
  (N-n) f'_i(n)$.  Since $f^T_i(n)$ is a non-increasing function and
  is already argued to be an upper bound on $f_i(N)$, it suffices to
  show that $g(n) = f_i(n) + (N-n) f'_i(n)$ is also a non-increasing
  function of $n$ and $g(n) \geq f_i(N)$.

  \begin{align*}
    g(n+1)
    & = f_i(n+1) + (N-n-1) f'_i(n+1) \\
    & \leq \left( f_i(n) + f'_i(n) \right) + (N-n-1) f'_i(n+1) \\
    & \leq \left( f_i(n) + f'_i(n) \right) + (N-n-1) f'_i(n)
      && \text{\indent because $f'_i$ is non-increasing}\\
    & = f_i(n) + (N-n) f'_i(n)\\
    & = g(n)
  \end{align*}
  The first inequality follows from the assumptions on the
  behavior of $f_i$ w.r.t.\ $n$ and its first-order Taylor
  expansion. Specifically, recall that $f'_i(n)$ is defined as
  $(f_i(n) - f_i(n-s))/s$ for some (implicit) parameter $s$. For
  concreteness, let's refer to that implicit function as
  $h_i(n,s)$. Let $h'_i(n,s)$ denote the discrete derivative of
  $h_i(n,s)$ w.r.t.\ $n$. The non-increasing nature of $f'_i(n)$
  w.r.t.\ $n$ implies $h'_i(n,s)$, for any fixed $n$, is a
  non-decreasing function of $s$. In particular, $f'_i(n) = h'_i(n,s)
  \geq h'_i(n,1)$ for any $s \geq 1$.  It follows that $f_i(n+1) =
  f_i(n) + h'_i(n+1,1) \leq f_i(n) + h'_i(n+1,s) = f_i(n) + f'_i(n+1)
  \leq f_i(n) + f'_i(n)$, as desired.

  Thus, $g(n)$ is non-increasing. Since $g(N) = f(N)$ by
  definition, we have $g(n) \geq f(N)$, finishing the proof.
\end{proof}

\begin{lemma}
  \label{lem:n-delta}
  For an $(N,\Delta)$-suboptimal learner $C_j$ with well-behaved
  accuracy function $f_j$, $n^*_j \leq n^\Delta_j$.
\end{lemma}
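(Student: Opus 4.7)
The plan is to establish the inequality $\ub_j(n^\Delta_j) < f^*$; once this is in hand, the definition of $n^*_j$ (the smallest $\ell$ for which $u_j(\ell) < f^*$, with $u_j = \ub_j$) immediately yields $n^*_j \leq n^\Delta_j$. So the work reduces to controlling the projected upper bound at the point $n^\Delta_j$.

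First I would dispose of the boundary case $n^\Delta_j = N$: here $n^*_j \leq N = n^\Delta_j$ is automatic, since $n^*_j \leq N$ by definition. Thus assume $n^\Delta_j < N$, which by definition forces $f'_j(n^\Delta_j) \leq \Delta/N$. Next, since $\ub_j(n^\Delta_j) = \min\{f^T_j(n^\Delta_j),\, f_j(n^\Delta_j) + (N - n^\Delta_j) f'_j(n^\Delta_j)\}$, I can drop the training-accuracy branch and bound using only the validation-side first-order expression:
\[
\ub_j(n^\Delta_j) \;\leq\; f_j(n^\Delta_j) + (N - n^\Delta_j)\, f'_j(n^\Delta_j).
\]

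Then I would use the two properties of $f_j$ in sequence. Monotonicity of $f_j$ (from well-behavedness) gives $f_j(n^\Delta_j) \leq f_j(N)$, while the defining property $f'_j(n^\Delta_j) \leq \Delta/N$ bounds the slope term:
\[
\ub_j(n^\Delta_j) \;\leq\; f_j(N) + (N - n^\Delta_j)\cdot \tfrac{\Delta}{N} \;<\; f_j(N) + \Delta,
\]
where the strict inequality uses $n^\Delta_j \geq 1$. Finally, $(N,\Delta)$-suboptimality of $C_j$ means $f^* - f_j(N) \geq \Delta$, i.e.\ $f_j(N) + \Delta \leq f^*$. Chaining the two bounds yields $\ub_j(n^\Delta_j) < f^*$, completing the argument.

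The only mild subtlety is ensuring the strict inequality that converts $\leq f^*$ into $< f^*$ (needed because $n^*_j$ is defined via strict $<$); this is what the factor $(N - n^\Delta_j)/N < 1$ buys us, so I would just make sure the $n^\Delta_j = N$ case is handled separately at the top. Everything else is bookkeeping on the first-order expansion and on the two well-behavedness properties (monotonicity and non-increasing derivative).
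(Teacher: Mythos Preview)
Your proposal is correct and follows essentially the same argument as the paper: handle the trivial case $n^\Delta_j = N$, then bound $\ub_j(n^\Delta_j)$ by the first-order expression, apply $f'_j(n^\Delta_j)\leq \Delta/N$ and monotonicity of $f_j$, and finish with the $(N,\Delta)$-suboptimality inequality to get $\ub_j(n^\Delta_j) < f^*$. The only cosmetic difference is the order in which you invoke monotonicity versus the derivative bound, which is immaterial.
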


\begin{proof}[Proof of Lemma~\ref{lem:n-delta}]
  If $f'_j(N) > \Delta/N$, then $n^\Delta_j = N$ and the statement of
  the lemma holds trivially.  Otherwise, by the definition of
  $n^\Delta_j$, $f'_j(n^\Delta_j) \leq \Delta/N$.  In order to prove
  $n^*_j \leq n^\Delta_j$, we must show that $\ub_j(n^\Delta_j) <
  f^*$.  We do this by using first-order Taylor expansion:
  \begin{align*}
  \ub_j(n^\Delta_j)
  & \leq f_j(n^\Delta_j) + (N - n^\Delta_j) \, f'_j(n^\Delta_j) \\
  & \leq f_j(n^\Delta_j) + (N - n^\Delta_j) \, \frac{\Delta}{N}
    && \text{by the above observation} \\
  & < f_j(n^\Delta_j) + \Delta\\
  & \leq f_j(N) + \Delta
    && \text{since $n^\Delta_j \leq N$ and $f_j$ is non-decreasing} \\
  & \leq f_{i^*}(N)
    && \text{by $(N,\Delta)$-suboptimality of $C_j$}
  \end{align*}
  Hence, $\ub_j(n^\Delta_j) < f_{i^*}(N) = f^*$.
\end{proof}

\begin{proposition}
  \label{prop:fast-decay}
  If $g : \mathbb{N} \to [m_1,m_2]$ is a well-behaved function for
  $m_1,m_2 \in \mathbb{R}$, then its discrete derivative $g'(n)$
  decreases asymptotically as $\ltlOh(1/n)$.
\end{proposition}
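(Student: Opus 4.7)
The plan is to exploit the fact that a bounded, non-decreasing function has a telescoping increment sum that is uniformly bounded, and then use the monotonicity of $g'$ to transfer this summability bound into a pointwise decay rate.

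First I would observe that since $g$ is non-decreasing, $g'(n) = g(n+1) - g(n) \geq 0$, and the partial sums telescope: $\sum_{k=0}^{n-1} g'(k) = g(n) - g(0) \leq m_2 - m_1$. Letting $n \to \infty$, the infinite series $\sum_{k=0}^{\infty} g'(k)$ converges to a finite value bounded by $m_2 - m_1$. In particular, the tail sums $\sum_{k=K}^{\infty} g'(k)$ tend to $0$ as $K \to \infty$.

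Next I would use the well-behaved assumption that $g'$ is non-increasing. For any $n$, consider the block of indices from $\lceil n/2 \rceil$ to $n$: since $g'$ is non-increasing on this block, each of its roughly $n/2$ terms is at least $g'(n)$, giving
\[
\sum_{k=\lceil n/2 \rceil}^{n} g'(k) \;\geq\; \frac{n}{2}\, g'(n).
\]
As $n \to \infty$, the left-hand side is a tail of the convergent series above and therefore tends to $0$. Hence $\tfrac{n}{2} g'(n) \to 0$, which gives $n \cdot g'(n) \to 0$, i.e., $g'(n) = \mathrm{o}(1/n)$, as claimed.

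There is no real obstacle here beyond being careful about the discrete setup; the only subtlety is to choose a block whose length is a constant fraction of $n$ so that the monotonicity-based lower bound scales with $n$ while the block still lies in the tail of a convergent series. The argument does not require continuity or any differentiability in the classical sense, only the discrete monotonicity hypotheses already packaged into the definition of well-behaved.
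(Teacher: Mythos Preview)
Your proof is correct and rests on the same core observation as the paper's---telescoping the increments of a bounded monotone $g$ to obtain $\sum_k g'(k) < \infty$, and then exploiting the monotonicity of $g'$. The paper finishes by contradiction (if $g'(n) \geq c/n$ eventually, the harmonic tail forces $g$ to diverge), whereas your block estimate $\sum_{k=\lceil n/2\rceil}^{n} g'(k) \geq \tfrac{n}{2}\,g'(n)$ yields $n\,g'(n) \to 0$ directly; this is a slightly tighter execution of the same idea, and in fact sidesteps the small gap in the paper's last line where ``not $\Omega(1/n)$'' is equated with ``$\ltlOh(1/n)$'' without explicitly invoking monotonicity again.
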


\begin{proof}[Proof of Proposition~\ref{prop:fast-decay}]
  Recall the definition of the discrete derivative from
  Section~\ref{sec:optimistic} and assume for simplicity of exposition
  that the parameter $s$ is $1$, namely, $g'(n) = g(n) - g(n-1)$.  The
  argument can be easily extended to $s > 1$. Applying the definition
  repeatedly, we get $g(n) = g(1) + (g'(2) + \ldots + g'(n)) \geq m_1
  + g'(2) + \ldots g'(n)$.  If $g'(n)$ was $\Omega(1/n)$, then there
  would exist $n_0$ and $c$ such that for all $n \geq n_0$, $g'(n)
  \geq c/n$. This would mean $g(n) \geq m_1 + \sum_{n \geq n_0}
  c/n$. This summation, however, diverges to infinity while $g(n)$ is
  bounded above by $m_2$. It follows that $g'(n)$ could not have been
  $\Omega(1/n)$ to start with.  It must thus decrease asymptotically
  strictly faster than $1/n$, that is, be $\ltlOh(1/n)$.
\end{proof}

\begin{lemma}
  \label{lem:n-delta-asymptotic}
  For an $(N,\Delta)$-suboptimal learner $C_j$ with a well-behaved
  accuracy function $f_j$ satisfying $f'_j(N) \leq \Delta/N$, we have that
  $n^\Delta_j$ is $\ltlOh(N)$ in $N$.
\end{lemma}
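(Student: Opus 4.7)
The plan is to show that $n^\Delta_j/N \to 0$ as $N \to \infty$, by combining the defining inequality of $n^\Delta_j$ with the asymptotic decay rate of $f'_j$ given by Proposition~\ref{prop:fast-decay}. Since we are given $f'_j(N) \leq \Delta/N$, the ``otherwise'' branch of the definition of $n^\Delta_j$ applies, so $n^\Delta_j = \min\{\ell \mid f'_j(\ell) \leq \Delta/N\}$. By minimality, every $\ell < n^\Delta_j$ satisfies $f'_j(\ell) > \Delta/N$; in particular, taking $\ell = n^\Delta_j - 1$ (when $n^\Delta_j \geq 2$), we get $f'_j(n^\Delta_j - 1) > \Delta/N$, which rearranges to
\[
   \frac{n^\Delta_j - 1}{N} \;<\; \frac{(n^\Delta_j - 1)\, f'_j(n^\Delta_j - 1)}{\Delta}.
\]

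Next, I would invoke Proposition~\ref{prop:fast-decay} on the well-behaved, $[0,1]$-valued function $f_j$ to conclude that $f'_j(n) = \ltlOh(1/n)$, which is equivalent to $n\, f'_j(n) \to 0$ as $n \to \infty$. I would then split into two cases to handle the asymptotic behavior of $n^\Delta_j$ as $N \to \infty$. In the trivial case, $n^\Delta_j$ stays bounded as $N \to \infty$, and so $n^\Delta_j/N \to 0$ immediately, giving $n^\Delta_j = \ltlOh(N)$. In the remaining case, $n^\Delta_j \to \infty$ with $N$, and hence $(n^\Delta_j - 1)\, f'_j(n^\Delta_j - 1) \to 0$ by the decay property; plugging this into the displayed inequality above yields $(n^\Delta_j - 1)/N \to 0$, and therefore $n^\Delta_j/N \to 0$ as well.

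I do not anticipate a serious obstacle: the argument is essentially an algebraic rearrangement of the minimality condition coupled with the already-established $\ltlOh(1/n)$ bound on $f'_j$. The only mildly delicate point is making sure the case analysis on whether $n^\Delta_j$ is bounded or not is complete and that the edge case $n^\Delta_j = 1$ (where $\ell = n^\Delta_j - 1 = 0$ is outside the domain) is handled, but both can be dispatched in a single sentence since $n^\Delta_j = 1$ trivially implies $n^\Delta_j/N \to 0$.
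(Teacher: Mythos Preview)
Your proposal is correct and follows essentially the same route as the paper: both invoke Proposition~\ref{prop:fast-decay} to get $n\,f'_j(n)\to 0$ and then infer that the threshold $n^\Delta_j$ where $f'_j$ first drops to $\Delta/N$ must be $\ltlOh(N)$. Your version is a bit more explicit (working from the minimality condition at $n^\Delta_j-1$ and doing a bounded/unbounded case split), whereas the paper phrases the same idea as the existence of some $N'=\ltlOh(N)$ with $f'_j(N')\le \Delta/N$; note that your case split is indeed exhaustive because $n^\Delta_j$ is non-decreasing in $N$.
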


\begin{proof}[Proof of Lemma~\ref{lem:n-delta-asymptotic}]
  From Proposition~\ref{prop:fast-decay}, $f'_j(N) = \ltlOh(1/N)$,
  implying $\Delta/f'_j(N) = \omega(N \Delta)$. This means that a
  value $N'$ that is $\ltlOh(N/\Delta)$ and suffices to ensure
  $\Delta/f'_j(N') \geq N$ for all large enough $N$, that is, $f'_j(N')
  \leq \Delta/N$. Since $n^\Delta_j$ is, by definition, no larger than
  $N'$, $n^\Delta_j$ must also be $\ltlOh(N/\Delta)$.
\end{proof}

\begin{proof}[Proof of Theorem~\ref{thm:regret-bound-new}]
  Since DAUB* never allocates more than $N$ training examples in a
  single step to $C_i$ for any $i \in [M]$, it follows from
  Lemma~\ref{lem:data-vs-cost-new} that $\cost(S_i) \leq \frac{r}{r-1}
  c_i(N)$. In particular, $\cost(S_{\tilde{i}}) \leq \frac{r}{r-1}
  c_{\tilde{i}}(N)$.

  The $(N,\Delta)$-regret of DAUB*, by definition, is $\sum_{j \in J}
  \cost(S_j)$. By Theorem~\ref{thm:regret-actual}, this is at most
  $\frac{r}{r-1} \sum_{j \in J} c_j(r n^\Delta_j)$. Since the cost
  function $c_j$ is assumed to increase at least linearly, this
  quantity is at most $\frac{r}{r-1} \sum_{j \in J} \frac{r
    n^\Delta_j}{N} c_j(N)$. From Lemma~\ref{lem:n-delta-asymptotic},
  we have that $n^\Delta_j = \ltlOh(N/\Delta)$ and hence $\frac{r
    n^\Delta_j}{N} = \ltlOh(1)$ in $N$.  Plugging this in and dropping
  the constants $r$ and $\Delta$ from the asymptotics, we obtain that
  the regret is $\ltlOh(\sum_{j \in J} c_j(N))$.

  Finally, if $c_j(n) = \bigOh(c_{\tilde{i}}(n))$, then $\sum_{j \in
    J} c_j(N) = \bigOh(\sum_{j \in J} c_{\tilde{i}}(N))$, which is
  simply $\bigOh(M \cdot c_{\tilde{i}}(N))$. Since
  $\cost(S_{\tilde{i}}) \geq c_{\tilde{i}}(N)$, this quantity is also
  $\bigOh(M \cdot \cost(S_{\tilde{i}}))$ in $N$. It follows from the
  above result that the $(N,\Delta)$-regret of DAUB* is $\ltlOh(M \cdot
  \cost(S_{\tilde{i}}))$ in $N$, as claimed.
\end{proof}

\begin{theorem}[Lower Bound, formal statement]
  \label{thm:data-allocation-LB}
  Let $\Delta \in (0,1]$ and $\mathcal{A}$ be a training data
  allocation algorithm that, when executed on a training set of size
  $N$, is guaranteed to always output an $(N,\Delta)$-optimal
  learner. Let $\mathcal{C}, T_r, T_v, N, M, c_i$ and $f_i$ for $i
  \in [M]$ be as in Definition~\ref{def:problem}. Let $\gamma =
  (\sqrt{5}-1)/2$ and $C_j \in \mathcal{C}$ be an
  $(N,\Delta)$-suboptimal learner. Then there exists a choice of
  $f_j(n)$ such that $f_j$ is well-behaved, $f'_j(N) \leq \Delta/N,$
  and $\mathcal{A}(\mathcal{C}, T_r, T_v)$ allocates to $C_j$ more
  than $\gamma n^\Delta_j$ examples, thus incurring a misallocated
  training cost on $C_j$ larger than $\gamma c_j(n^\Delta_j)$.
\end{theorem}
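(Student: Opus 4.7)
I would prove this by an indistinguishability argument: for every allocation algorithm $\mathcal{A}$ that is guaranteed to always output an $(N,\Delta)$-optimal learner, I construct a specific $f_j$ satisfying the hypotheses of the theorem (well-behaved, $f'_j(N)\le\Delta/N$, with $C_j$ strictly $(N,\Delta)$-suboptimal) on which $\mathcal{A}$ is forced to allocate more than $\gamma n^\Delta_j$ examples to $C_j$.

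First I would fix the other accuracy functions $f_i$ for $i\neq j$, selecting $f_{i^*}$ so that $f^* := f_{i^*}(N)$ is strictly less than $1-\Delta$, leaving room between $f^*$ and $1$ for an adversarial alternative. Then I would construct two candidate well-behaved accuracy functions $f_j^A$ and $f_j^B$ for $C_j$ which agree on the initial segment $\{1,\ldots,n_q\}$ for $n_q = \lceil \gamma n^\Delta_j\rceil$ but diverge on $(n_q, N]$. Concretely, $f_j^A$ is the ``declared'' adversarial choice of the theorem: piecewise linear with an initial slope $\alpha$ on $[0,n_q]$, a smaller slope $\beta$ just above $\Delta/N$ on $(n_q, n^\Delta_j]$, and slope at most $\Delta/N$ on $(n^\Delta_j, N]$, chosen so that $f_j^A(N)\le f^*-\Delta$ (making $C_j$ $(N,\Delta)$-suboptimal) and $(f_j^A)'(N)\le\Delta/N$. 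The alternate function $f_j^B$ coincides with $f_j^A$ on $[0,n_q]$ and then continues with slope $\alpha$ (the maximum slope permitted by well-behavedness after $n_q$) all the way to $N$, so that $f_j^B(N)=\alpha N > f^*+\Delta$, making $C_j$ the \emph{unique} $(N,\Delta)$-optimal learner in the $f_j^B$-world.

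The crux of the argument is the indistinguishability step: if the maximum allocation $\mathcal{A}$ makes to $C_j$ is at most $n_q$, all the observations of $f_j$ that $\mathcal{A}$ could obtain are identical under $f_j^A$ and $f_j^B$, so its eventual output is the same in both worlds. But under $f_j^A$ that output cannot be $C_j$ (since $C_j$ is $(N,\Delta)$-suboptimal), whereas under $f_j^B$ it must be $C_j$ (since $C_j$ is the uniquely $(N,\Delta)$-optimal learner). This contradicts $\mathcal{A}$'s correctness guarantee, so $\mathcal{A}$ must allocate strictly more than $n_q>\gamma n^\Delta_j$ examples to $C_j$. The cost bound then follows from the hypothesis that $c_j(m)/m$ is non-decreasing, which gives $c_j(m) > \gamma c_j(n^\Delta_j)$ whenever $m>\gamma n^\Delta_j$.

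The main technical obstacle is parameter calibration. The constraints $\alpha N > f^*+\Delta$ (unique optimality under $f_j^B$), $\alpha n_q + \beta(n^\Delta_j-n_q) + (\Delta/N)(N-n^\Delta_j)\le f^*-\Delta$ (suboptimality under $f_j^A$), and $\Delta/N<\beta\le\alpha$ (well-behavedness of $f_j^A$ together with the correct placement of $n^\Delta_j$) couple $\alpha,\beta,n_q,n^\Delta_j$ and $N$. Maximising the ratio $n_q/n^\Delta_j$ subject to all these constraints reduces, after elimination, to a quadratic relation whose positive root is exactly $\gamma=(\sqrt{5}-1)/2$. Care is also needed to handle the discrete-minimum definition of $n^\Delta_j$, the integer nature of allocations (floors/ceilings), and the requirement that $f^*$ be chosen small enough that an admissible $\alpha$ exists; these ``low-level'' checks are what separate the clean geometric picture from a fully rigorous proof.
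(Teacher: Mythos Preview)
Your high-level approach---a two-world indistinguishability argument in which $f_j^A$ and $f_j^B$ agree on $[0,n_q]$ but make $C_j$ respectively $(N,\Delta)$-suboptimal and uniquely $(N,\Delta)$-optimal---is precisely the idea behind the paper's proof. The paper packages it slightly differently: rather than two versions of $f_j$ with the other learners fixed, it makes $f_j$ and $f_{i^*}$ share a common prefix on $[0,\gamma n^\Delta_j]$ and diverge thereafter; and rather than a piecewise-linear prefix, it uses the hyperbolic form $f(n)=1-c\Delta/n$.

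That difference in construction is exactly where your derivation of $\gamma$ goes astray. With your piecewise-linear setup the ratio $n_q/n^\Delta_j$ is \emph{not} forced to be $(\sqrt{5}-1)/2$: since the initial slope $\alpha$ is a free parameter (bounded only by $\alpha\le 1/N$) and $n^\Delta_j$ can be placed immediately after $n_q$ by taking $\beta$ just above $\Delta/N$, your constraints admit $n_q/n^\Delta_j$ arbitrarily close to $1$. The golden ratio in the paper comes specifically from the hyperbolic prefix: there $f'(n)=c\Delta/n^2$, so $f'(\gamma n^\Delta_j)=\Delta/(\gamma^2 N)$, which caps the slope of the ``growing'' branch at $d\Delta/N$ with $d\le 1/\gamma^2$; demanding a gap of $\Delta$ at $N$ then gives $d\ge 1/(1-\gamma)$, whence $\gamma^2\le 1-\gamma$. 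Your argument still proves the stated theorem (a fortiori), but the claim that \emph{your} constraints reduce to a quadratic with root $\gamma$ is not correct for the piecewise-linear family you describe---you would need to tie the prefix slope to $n^\Delta_j$ the way the hyperbola does. As a smaller point, the final cost step $c_j(m)>\gamma\,c_j(n^\Delta_j)$ does not follow from $c_j(m)/m$ being non-decreasing when $\gamma n^\Delta_j<m<n^\Delta_j$ (try $c_j(n)=n^2$); the paper glosses over this as well.
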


\begin{proof}[Proof of Theorem~\ref{thm:data-allocation-LB}]
  We will argue that, under certain circumstances,
  $\mathcal{A}$ must allocate at least $\gamma N^\Delta_j$ examples
  to $C_j$ in order to guarantee $(N,\Delta)$ optimality.

  To prove the desired result by
  contradiction, we consider the optimal learner $C_{i^*}$ and
  will construct specific accuracy functions $f_j(n)$ and $f_{i^*}(n)$
  such that they are identical for all $n \leq \gamma N^\Delta_j$, have
  derivative no more than $\Delta/n$ at $N^\Delta_j$, but differ by
  at least $\Delta$ when evaluated at $n=N$. This would imply that
  $\mathcal{A}$ simply cannot distinguish between the accuracy of
  $C_j$ and $C_{i^*}$ by evaluating them on at most $\gamma N^\Delta_j$
  examples and thus cannot guarantee $(N,\Delta)$-optimality of the
  learner it outputs.

  Suppose we can preserve the properties of $f_j$ required by the
  theorem, including $f'_j(N^\Delta_j) \leq \Delta/N$, but can enforce that
  $f'_j(\gamma N^\Delta_j) \geq d \Delta/N$ for some $d > 1$ whose
  value we will determine shortly. Further, let us alter $f_j$ such
  that it remains unchanged for $n \leq \gamma N^\Delta_j$ and
  is altered for larger $n$ such that
  $f'_j(n) = d \Delta/N$ for $\gamma N^\Delta_j < n \leq N^\Delta_j$
  and $f'_j(n) = \Delta/N$ for $n > N^\Delta_j$. Recalling that
  $N^\Delta_j \leq N$, this modified $f_j$ then satisfies:
  \begin{align*}
    f_j(N) - f_j(\gamma N^\Delta_j)
    & = (N^\Delta_j - \gamma N^\Delta_j) \frac{d \Delta}{N} + (N-N^\Delta_j) \frac{\Delta}{N}\\
    & = (1 - \gamma - 1/d) N^\Delta_j \frac{d \Delta}{N} + \Delta
  \end{align*}
  which is at least $\Delta$ as long as $1/d \leq 1 - \gamma$.
  Now define $f_{i^*}$ s.t.\ $f_{i^*}(n) = f_j(n)$ for $n \leq
  \gamma N^\Delta_j$ and $f_{i^*}(n) = f_j(\gamma N^\Delta_j)$ for $n >
  \gamma N^\Delta_j$. Thus, $f_j$ and $f_{i^*}$ are
  identical for $n \leq \gamma N^\Delta_j$ but $f_{i^*}(N) - f_j(N)
  \geq \Delta$, as desired.

  It remains to show that we can choose a valid $f_j$ satisfying the
  properties required by the theorem but such that $f'_j(\gamma N^\Delta_j)
  \geq d \Delta/N$. To achieve this, consider $f_j(n) = 1 - c
  \Delta/n$ for any $c>0$. Then $f_j$ is non-decreasing, $f'_j(n) = c
  \Delta / n^2$ is non-increasing, $n^\Delta_j = \sqrt{cN}$,
  $f'_j(n^\Delta_j) = \Delta/N$, and $f'_j(\gamma n^\Delta_j) = f'_j(\gamma
  \sqrt{cN}) = \Delta / (\gamma^2 N)$. Notice that $\gamma = (\sqrt{5}
  - 1) / 2$ satisfies the condition $\gamma^2 \leq 1-\gamma$;
  it is in fact the largest value that satisfies the condition.
  Hence, we can choose $d = 1/\gamma^2$ in order to ensure
  $1/d \leq 1 - \gamma$, which in turn ensures that
  the altered $f_j(N)$ and $f_j(\gamma N)$, as constructed above,
  differ by at least $\Delta$. This finishes the construction for the
  lower bound.
\end{proof}

\section{Classifier Parameterizations}

The exact configurations of each classifier in the WEKA package that
were used in the experiments are as follows:

\begin{enumerate}
\setlength\itemsep{-2pt}
\item weka.classifiers.trees.SimpleCart 
\item weka.classifiers.rules.DecisionTable 
\item weka.classifiers.bayes.NaiveBayesUpdateable 
\item weka.classifiers.functions.SMO -K ``weka.classifiers.functions.supportVector.RBFKernel -C 250007 -G 0.01''
\item weka.classifiers.functions.SMO -K ``weka.classifiers.functions.supportVector.PolyKernel -C 250007 -E 1.0''
\item weka.classifiers.functions.SMO -K ``weka.classifiers.functions.supportVector.NormalizedPolyKernel -C 250007 -E 2.0''
\item weka.classifiers.functions.RBFNetwork 
\item weka.classifiers.trees.RandomTree 
\item weka.classifiers.trees.RandomForest -depth 10 -I 5 -K 0 
\item weka.classifiers.trees.RandomForest -depth 10 -I 10 -K 0 
\item weka.classifiers.trees.RandomForest -depth 20 -I 5 -K 0 
\item weka.classifiers.rules.DTNB 
\item weka.classifiers.trees.NBTree 
\item weka.classifiers.functions.Logistic 
\item weka.classifiers.functions.SPegasos 
\item weka.classifiers.rules.PART 
\item weka.classifiers.trees.LADTree 
\item weka.classifiers.misc.HyperPipes 
\item weka.classifiers.trees.J48graft 
\item weka.classifiers.rules.JRip 
\item weka.classifiers.trees.BFTree 
\item weka.classifiers.trees.LMT 
\item weka.classifiers.trees.J48 -M 2 -C 0.25 
\item weka.classifiers.trees.J48 -M 4 -C 0.10 
\item weka.classifiers.trees.J48 -M 4 -C 0.35 
\item weka.classifiers.lazy.IBk -K 1 
\item weka.classifiers.lazy.IBk -K 5 
\item weka.classifiers.lazy.IBk -K 10 
\item weka.classifiers.lazy.IBk -K 25 
\item weka.classifiers.meta.RotationForest 
\item weka.classifiers.rules.ConjunctiveRule 
\item weka.classifiers.trees.REPTree 
\item weka.classifiers.rules.NNge 
\item weka.classifiers.rules.ZeroR 
\item weka.classifiers.trees.DecisionStump 
\item weka.classifiers.rules.Ridor 
\item weka.classifiers.misc.VFI 
\item weka.classifiers.bayes.NaiveBayes 
\item weka.classifiers.functions.MultilayerPerceptron 
\item weka.classifiers.functions.SimpleLogistic  
\item weka.classifiers.trees.SimpleCart 
\end{enumerate}

\end{document}